\newcommand\numberthis{\addtocounter{equation}{1}\tag{\theequation}}
\newcommand{\muhat}{\widehat{\mu}}
\newcommand{\Rect}{\mathrm{Rect}}
\newcommand{\Sep}{\mathrm{Sep}}
\newcommand{\Uni}{\mathrm{Opt}}
\newcommand{\vol}{\mathrm{vol}}
\icmltitlerunning{On the Limitations of First-Order Approximation in GAN Dynamics}
\begin{document}

\twocolumn[
\icmltitle{On the Limitations of First-Order Approximation in GAN Dynamics}




\begin{icmlauthorlist}
\icmlauthor{Jerry Li}{mit}
\icmlauthor{Aleksander M\k{a}dry}{mit}
\icmlauthor{John Peebles}{mit}
\icmlauthor{Ludwig Schmidt}{mit}
\end{icmlauthorlist}

\icmlaffiliation{mit}{MIT}

\icmlcorrespondingauthor{Jerry Li}{jerryzli@mit.edu}

\icmlkeywords{Machine Learning, ICML}

\vskip 0.3in
]



\printAffiliationsAndNotice{}  



%


\begin{abstract}
While Generative Adversarial Networks (GANs) have demonstrated promising performance on multiple vision tasks, their learning dynamics are not yet well understood, both in theory and in practice.
To address this issue, we study GAN dynamics in a simple yet rich parametric model that exhibits several of the common problematic convergence behaviors such as vanishing gradients, mode collapse, and diverging or oscillatory behavior.
In spite of the non-convex nature of our model, we are able to perform a rigorous theoretical analysis of its convergence behavior.
Our analysis reveals an interesting dichotomy: a GAN with an optimal discriminator provably converges, while first order approximations of the discriminator steps lead to unstable GAN dynamics and mode collapse.
Our result suggests that using first order discriminator steps (the de-facto standard in most existing GAN setups) might be one of the factors that makes GAN training challenging in practice.


\end{abstract}


\section{Introduction}
Generative Adversarial Networks (GANs) have recently been proposed as a novel framework for learning generative models \citep{GPM14}.
In a nutshell, the key idea of GANs is to learn \emph{both} the generative model and the loss function at the same time.
The resulting training dynamics are usually described as a game between a generator (the generative model) and a discriminator (the loss function).
The goal of the generator is to produce realistic samples that fool the discriminator, while the discriminator is trained to distinguish between the true training data and samples from the generator.
GANs have shown promising results on a variety of tasks, and there is now a large body of work that explores the power of this framework \citep{Goodfellow17}.

Unfortunately, reliably training GANs is a challenging problem that often hinders further research and applicability in this area.
Practitioners have encountered a variety of obstacles in this context such as vanishing gradients, mode collapse, and diverging or oscillatory behavior \citep{Goodfellow17}.
At the same time, the theoretical underpinnings of GAN dynamics are not yet well understood. 
To date, there were no convergence proofs for GAN models, even in very simple settings. 
As a result, the root cause of frequent failures of GAN dynamics in practice remains unclear.

In this paper, we take a first step towards a principled understanding of GAN \emph{dynamics}. Our general methodology is to propose and examine a problem setup that exhibits all common failure cases of GAN dynamics while remaining sufficiently simple to allow for a rigorous analysis.
Concretely, we introduce and study the {GMM-GAN}: a variant of GAN dynamics that captures learning a mixture of two univariate Gaussians. We first show experimentally that standard gradient dynamics of the GMM-GAN often fail to converge due to mode collapse or oscillatory behavior.
Interestingly, this also holds for techniques that were recently proposed to improve GAN training such as unrolled GANs \citep{MPPS17}.
In contrast, we then show that GAN dynamics with an \emph{optimal} discriminator \emph{do} converge, both experimentally and \emph{provably}.
To the best of our knowledge, our theoretical analysis of the GMM-GAN is the first global convergence proof for parametric and non-trivial GAN dynamics.

Our results show a clear dichotomy between the dynamics arising from applying simultaneous gradient descent and the one that is able to use an optimal discriminator.
The GAN with optimal discriminator provably converges from (essentially) {\em any} starting point.
On the other hand, the simultaneous gradient GAN empirically often fails to converge, even when the discriminator is allowed many more gradient steps than the generator. 
These findings go against the common wisdom that first order methods are sufficiently strong for all deep learning applications.
By carefully inspecting our models, we are able to pinpoint some of the causes of this, and we highlight a phenomena we call \emph{discriminator collapse} which often causes first order methods to fail in our setting.



\section{Generative Adversarial Dynamics}
\label{sec:dynamics}

Generative adversarial networks are commonly described as a two player game \citep{GPM14}.
Given a true distribution $P$, a set of generators $\calG = \{G_u, u \in \mathcal{U} \}$, a set of discriminators $\calD = \{D_v, v \in \mathcal{V} \}$, and a monotone measuring function $m: \R \to \R$, the objective of GAN training is to find a generator $u$ in
\begin{equation}
\label{eq:GAN}
\argmin_{u \in \mathcal{U}} \, \max_{v \in \mathcal{V}} \, \E_{x \sim P} [m(D_v (x))] + \E_{x \sim G_u} [m(1 - D_v(x))] \; .
\end{equation}
In other words, the game is between two players called the generator and discriminator, respectively.
The goal of the discriminator is to distinguish between samples from the generator and the true distribution.
The goal of the generator is to fool the discriminator by generating samples that are similar to the data distribution.

By varying the choice of the measuring function and the set of discriminators, one can capture a wide variety of loss functions. 
Typical choices that have been previously studied include the KL divergence and the Wasserstein distance \citep{GPM14, ACB17}.
This formulation can also encode other common objectives: most notably, as we will show, the total variation distance.

To optimize the objective \eqref{eq:GAN}, the most common approaches are variants of simultaneous gradient descent on the generator $u$ and the discriminator $v$.
But despite its attractive theoretical grounding, GAN training is plagued by a variety of issues in practice.
Two major problems are \emph{mode collapse} and \emph{vanishing gradients}.
Mode collapse corresponds to situations in which the generator only learns a subset (a few modes) of the true distribution $P$ \citep{Goodfellow17,AZ17}.
For instance, a GAN trained on an image modeling task would only produce variations of a small number of images.
Vanishing gradients \citep{ACB17, AB17, AGLMZ17} are, on the other hand, a failure case where the generator updates become vanishingly small, thus making the GAN dynamics not converge to a satisfying solution. Despite many proposed explanations and approaches to solve the vanishing gradient problem, it is still often observed in practice \citep{Goodfellow17}.

\subsection{Towards a principled understanding of GAN dynamics}
GANs provide a powerful framework for generative modeling.
However, there is a large gap between the theory and practice of GANs.
Specifically, to the best of the authors' knowledge, all theoretical studies of GAN dynamics for parametric models simply consider global optima and stationary points of the dynamics, and there has been no rigorous study of the actual GAN dynamics.
In practice, GANs are always optimized using first order methods, and the current theory of GANs cannot tell us whether or not these methods converge to a meaningful solution.
This raises a natural question, also posed as an open problem in \citep{Goodfellow17}: 

Our theoretical understanding of GANs is still fairly poor. In particular, to the best of the authors' knowledge, all existing analyzes of GAN dynamics for parametric models simply consider global optima and stationary points of the dynamics. There has been no rigorous study of the actual GAN dynamics, except studying it in the immediate neighborhood of such stationary points  \citep{NK17}. This raises a natural question: 
\vspace{-5pt}
\begin{center}
\emph{Can we understand the convergence behavior of GANs?}
\end{center}
\vspace{-5pt}
This question is difficult to tackle for many reasons. One of them is the non-convexity of the GAN objective/loss function, and of the generator and discriminator sets. Another one is that, in practice, GANs are always optimized using first order methods. That is, instead of following the ``ideal'' dynamics that has both the generator and discriminator always perform the optimal update, we just approximate such updates by a sequence of gradient steps. This is motivated by the fact that computing such optimal updates is, in general, algorithmically intractable, and adds an additional layer of complexity to the problem. 

In this paper, we want to change this state of affairs and initiate the study of GAN dynamics from an algorithmic perspective. Specifically, we pursue the following question:
\vspace{-4pt}
\begin{center}
\emph{What is the impact of using first order approximation on the convergence of GAN dynamics?}
\end{center}
\vspace{-4pt}
Concretely, we focus on analyzing the difference between two GAN dynamics: a ``first order'' dynamics, in which both the generator and discriminator use first order updates; and an ``optimal discriminator'' dynamics, in which only the generator uses first order updates but the discriminator always makes an optimal update. Even the latter, simpler dynamics has proven to be challenging to understand. Even the question of whether using the optimal discriminator updates is the right approach has already received considerable attention. In particular, \citep{AB17} present theoretical evidence that using the optimal discriminator at each step may not be desirable in certain settings (although these settings are very different to the one we consider in this paper).

We approach our goal by defining a simple GAN model whose dynamics, on one hand, captures many of the difficulties of real-world GANs but, on the other hand, is still simple enough to make analysis possible. We then rigorously study our questions in the context of this model. Our intention is to make the resulting understanding be the first step towards crystallizing a more general picture. 

\section{A Simple Model for Studying GAN Dynamics}
\label{sec:model-definition}
Perhaps a tempting starting place for coming up with a simple but meaningful set of GAN dynamics is to consider the generators being univariate Gaussians with fixed variance.
Indeed, in the supplementary material we give a short proof that simple GAN dynamics always converge for this class of generators.
However, it seems that this class of distributions is insufficiently expressive to exhibit many of the phenomena such as mode collapse mentioned above.
In particular, the distributions in this class are all unimodal, and it is unclear what mode collapse would even mean in this context.


\paragraph{Generators.}
The above considerations motivate us to make our model slightly more complicated. We assume that the true distribution and the generator distributions are all mixtures of two univariate Gaussians with unit variance, and uniform mixing weights.
Formally, our generator set is $\calG$, where
\begin{equation}
\label{eq:gens}
\calG = \left\{ \frac{1}{2} \normal (\mu_1, 1) + \frac{1}{2} \normal (\mu_2, 1) \mid \mu_1, \mu_2 \in \R \right\} \; .
\end{equation}
For any $\mu \in \R^2$, we let $G_\mu (x)$ denote the distribution in $\calG$ with means at $\mu_1$ and $\mu_2$.
While this is a simple change compared to a single Gaussian case, it makes a large difference in the behavior of the dynamics. In particular, many of the pathologies present in real-world GAN training begin to appear.

\paragraph{Loss function.} 
While GANs are usually viewed as a generative framework, they can also be viewed as a general method for density estimation.
We want to set up learning an unknown generator $G_{\mu^*} \in \calG$ as a generative adversarial dynamics.
To this end, we must first define the loss function for the density estimation problem.
A well-studied goal in this setting is to recover $G_{\mu^*} (x)$ in total variation (also known as $L^1$ or statistical) distance, where the total variation distance between two distributions $P, Q$ is defined as 
\begin{equation}
\label{eq:TV}
d_{\TV} (P, Q) = \frac{1}{2} \int_\Omega |P(x) - Q(x)| dx = \max_{A} P(A) - Q(A) \; ,
\end{equation}
where the maximum is taken over all measurable events $A$.

Such finding the best-fit distribution in total variation distance can indeed be naturally phrased as generative adversarial dynamics. 
Unfortunately, for arbitrary distributions, this is algorithmically problematic, simply because the set of discriminators one would need is intractable to optimize over.

However, for distributions that are structurally simple, like mixtures of Gaussians, it turns out we can consider a much simpler set of discriminators.
In Appendix \ref{app:ak} in the supplementary material, motivated by connections to VC theory, we show that for two generators $G_{\mu_1}, G_{\mu_2} \in \calG$, we have
\begin{equation}
\label{eq:Ak}
d_{\TV} (G_{\mu_1}, G_{\mu_2}) = \max_{E = I_1 \cup I_2} G_{\mu_1} (E) - G_{\mu_2} (E) \; ,
\end{equation}
where the maxima is taken over two disjoint intervals $I_1, I_2 \subseteq \R$.
In other words, instead of considering the difference of measure between the two generators $G_{\mu_1}, G_{\mu_2}$ on arbitrary events, we may restrict our attention to unions of two disjoint intervals in $\R$.
This is a special case of a well-studied distance measure known as the $\A_k$-distance, for $k = 2$ \citep{DL12, CDSS14}.
Moreover, this class of subsets has a simple parametric description.

\paragraph{Discriminators.}
Now, the above discussion motivates our definition of discriminators to be
\begin{equation}
\label{eq:discs}
\calD = \{\I_{[\ell_1, r_1]} + \I_{[\ell_2, r_2]} | \ell, r \in\R^2 ~\mbox{s.t.}~\ell_1 \leq r_1 \leq \ell_2 \leq r_2 \} \; .
\end{equation}

In other words, the set of discriminators is taken to be the set of indicator functions of sets which can be expressed as a union of at most two disjoint intervals.
With this definition, finding the best fit in total variation distance to some unknown $G_{\mu^*} \in \calG$ is equivalent to finding $\muhat$ minimizing
\begin{align*}
\muhat &= \argmin_{\mu} \max_{\ell, r} L(\mu, \ell, r ) \; ~\mbox{, where} \\
L(\mu, \ell ,r) &= \E_{x \sim G_{\mu^*}} [D(x)] +  \E_{x \sim G_{\mu}} [1 - D (x)]  \label{eq:dynamics} \numberthis
\end{align*}
is a smooth function of all three parameters (see the supplementary material for details).

\paragraph{Dynamics.}
The objective in (\ref{eq:dynamics}) is easily amenable to optimization at parameter level.
A natural approach for optimizing this function would be to define $G(\muhat) =  \max_{\ell, r} L(\muhat, \ell, r )$, and to perform (stochastic) gradient descent on this function. 
This corresponds to, at each step, finding the the optimal discriminator, and updating the current $\muhat$ in that direction.
We call these dynamics the \emph{optimal discriminator dynamics}.
Formally, given $\muhat^{(0)}$ and a stepsize $\eta_g$, and a true distribution $G_{\mu^*} \in \calG$, the optimal discriminator dynamics for $G_{\mu^*}, \calG, \calD$ starting at $\muhat^{(0)}$ are given iteratively as
\begin{align}
\ell^{(t)}, r^{(t)} &= \argmax_{\ell, r} L {(\muhat^{(t)}, \ell, r)} \; ,\\
\muhat^{(t + 1)} &= \muhat^{(t)} - \eta_g \nabla_{\mu} L (\muhat^{(t)}, \ell^{(t)}, r^{(t)}) \label{eq:muhat-update} \; ,
\end{align}
where the maximum is taken over $\ell, r$ which induce two disjoint intervals.


For more complicated generators and discriminators such as neural networks, these dynamics are computationally difficult to perform.
Therefore, instead of the updates as in (\ref{eq:muhat-update}), one resorts to simultaneous gradient iterations on the generator and discriminator.
These dynamics are called the \emph{first order dynamics}.
Formally, given $\muhat^{(0)}, \ell^{(0)}, r^{(0)}$ and a stepsize $\eta_g, \eta_d$, and a true distribution $G_{\mu^*} \in \calG$, the first order dynamics for $G_{\mu^*}, \calG, \calD$ starting at $\muhat^{(0)}$ are specified as
\begin{align}
\muhat^{(t + 1)} &= \muhat^{(t)} 	- \eta_g \nabla_{\mu} L (\muhat^{(t)}, \ell^{(t)}, r^{(t)}) \label{eq:muhat-update1} \\
r^{(t + 1)} &= r^{(t)} + \eta_d \nabla_{r} L (\muhat^{(t)}, \ell^{(t)}, r^{(t)}) \\
\ell^{(t + 1)} &= \ell^{(t)} + \eta_d \nabla_{\ell} L (\muhat^{(t)}, \ell^{(t)}, r^{(t)}) \label{eq:ell-update} \; .
\end{align}
Even for our relatively simple setting, the first order dynamics can exhibit a variety of behaviors, depending on the starting conditions of the generators and discriminators.
In particular, in Figure \ref{fig:gallery}, we see that depending on the initialization, the dynamics can either converge to optimality, exhibit a primitive form of mode collapse, where the two generators collapse into a single generator, or converge to the wrong value, because th{}e gradients vanish.
This provides empirical justification for our model, and shows that these dynamics are complicated enough to model the complex behaviors which real-world GANs exhibit.
Moreover, as we show in Section \ref{sec:experiments} below, these behaviors are not just due to very specific pathological initial conditions: indeed, when given random initial conditions, the first order dynamics still more often than not fail to converge.
\paragraph{Parametrization}
We note here that there could be several potential GAN dynamics to consider here. Each one resulting from slightly different parametrization of the total variation distance.
For instance, a completely equivalent way to define the total variation distance is
\vspace{-1pt}
\begin{equation}
\label{eq:TV2}
d_{\TV} (P, Q) = \max_{A} | P(A) - Q(A) | \; ,
\end{equation}
which does not change the value of the variational distance, but does change the induced dynamics. 
We do not focus on these induced dynamics in this paper since they do not exactly fit within the traditional GAN framework, i.e. it is not of the form~(\ref{eq:GAN}) (see Appendix~\ref{sec:alternative-dynamics}). 
Nevertheless, it is an interesting set of dynamics and it is a natural question whether similar phenomena occur in these dynamics.
In Appendix~\ref{sec:alternative-dynamics}, we show the the optimal discriminator dynamics are unchanged, and the induced first order dynamics have qualitatively similar behavior to the ones we consider in this paper. This also suggests that the phenomena we exhibit might be more fundamental.
\begin{figure*}[h!!]
\centering
\begin{tabular}{cccc}
\subfloat[Converging behavior]{\includegraphics[width = .43\textwidth]{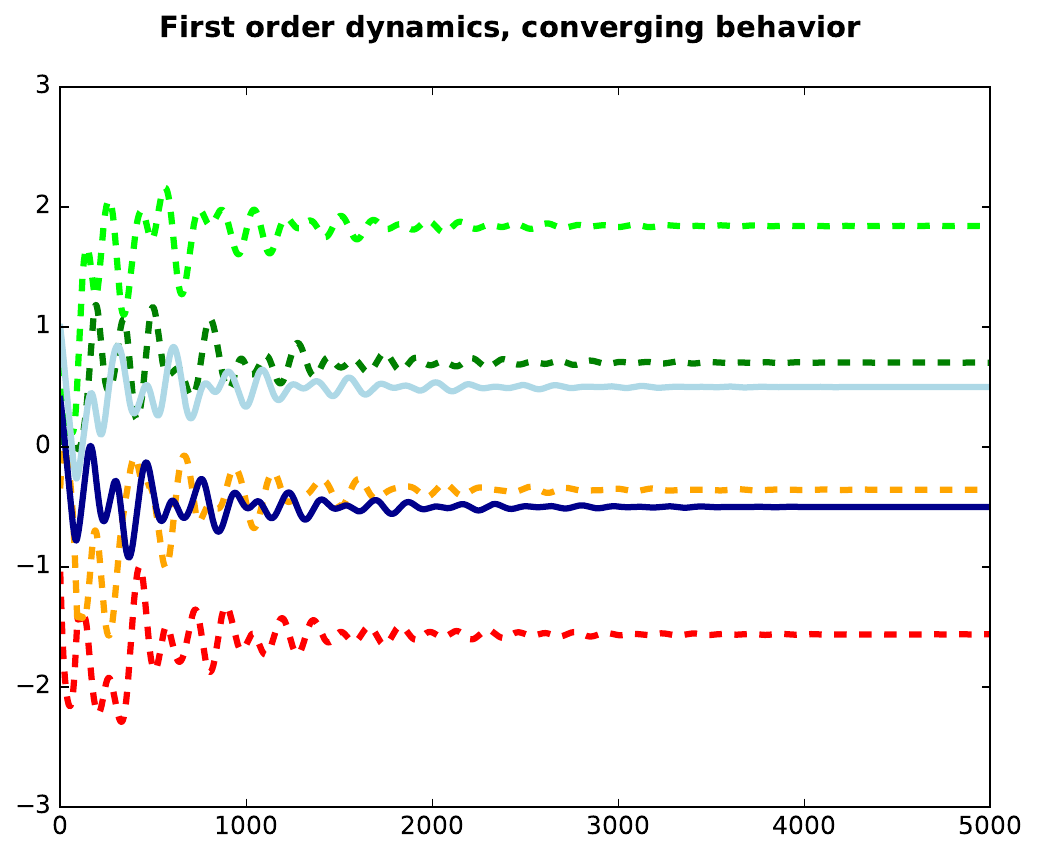}}\  &  \subfloat[Mode collapse and oscillation]{\includegraphics[width = .43\textwidth]{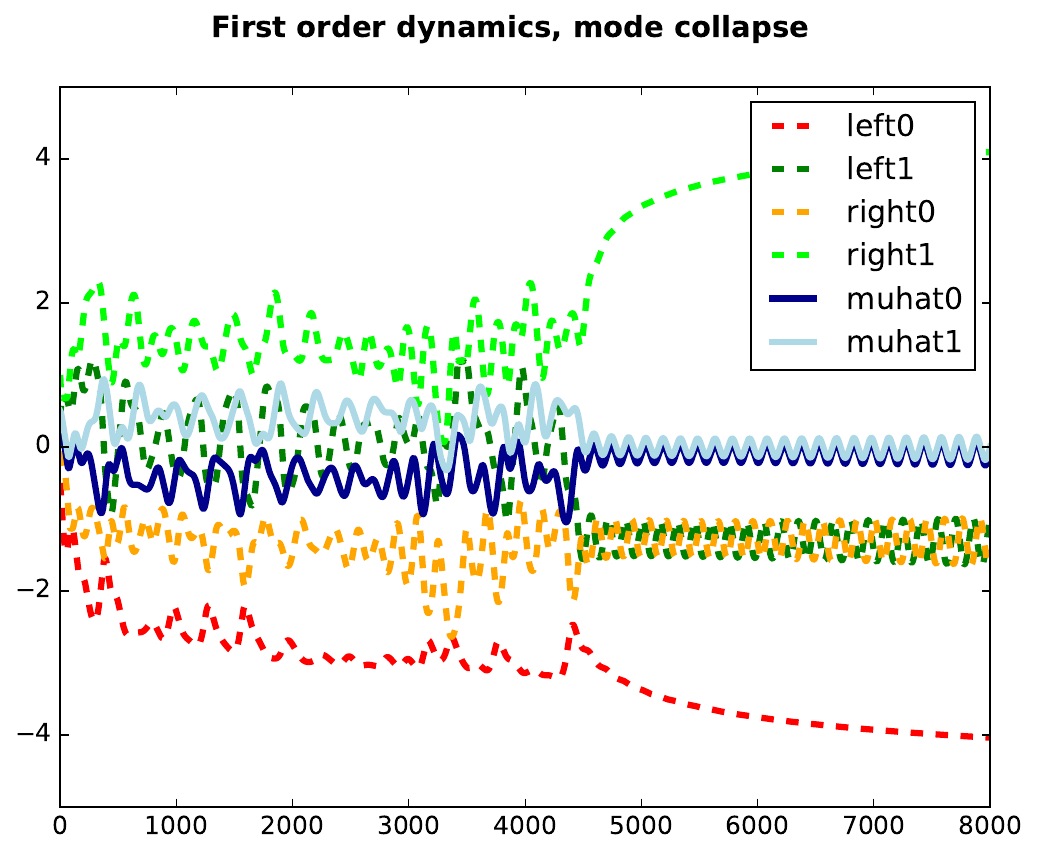}} \\
\subfloat[Optimal discriminator]{\includegraphics[width = .43\textwidth]{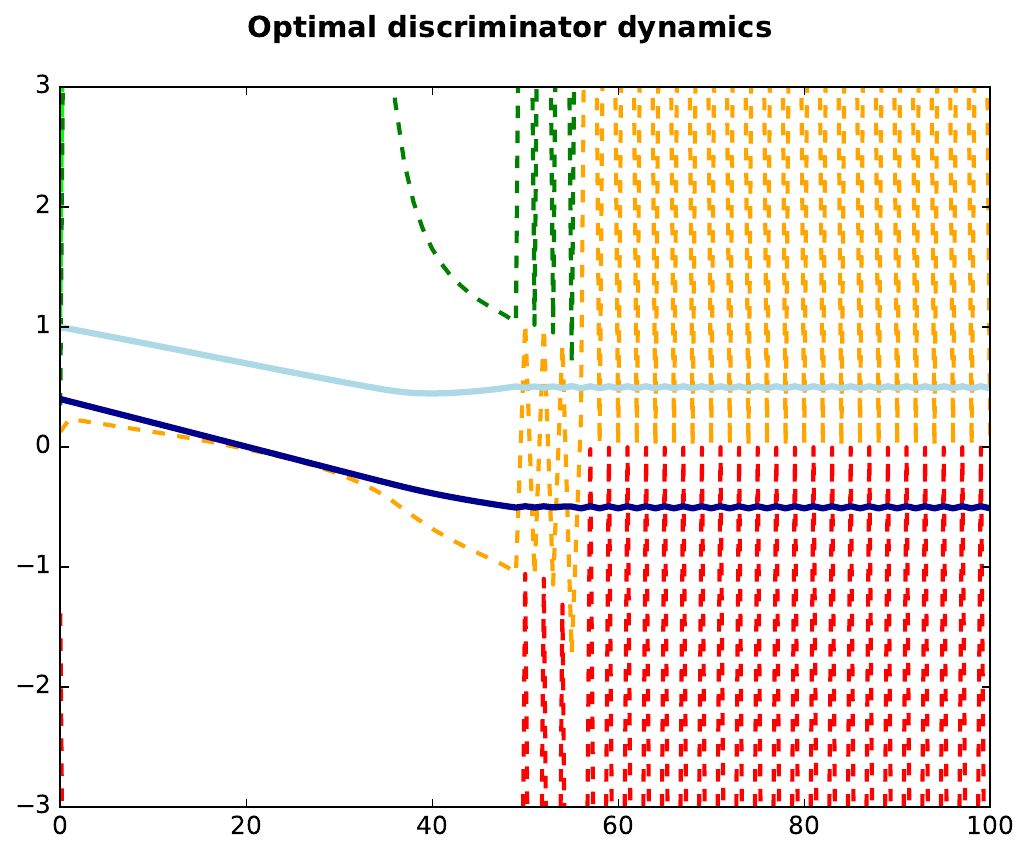}}\  &  \subfloat[Vanishing gradient]{\includegraphics[width = .43\textwidth]{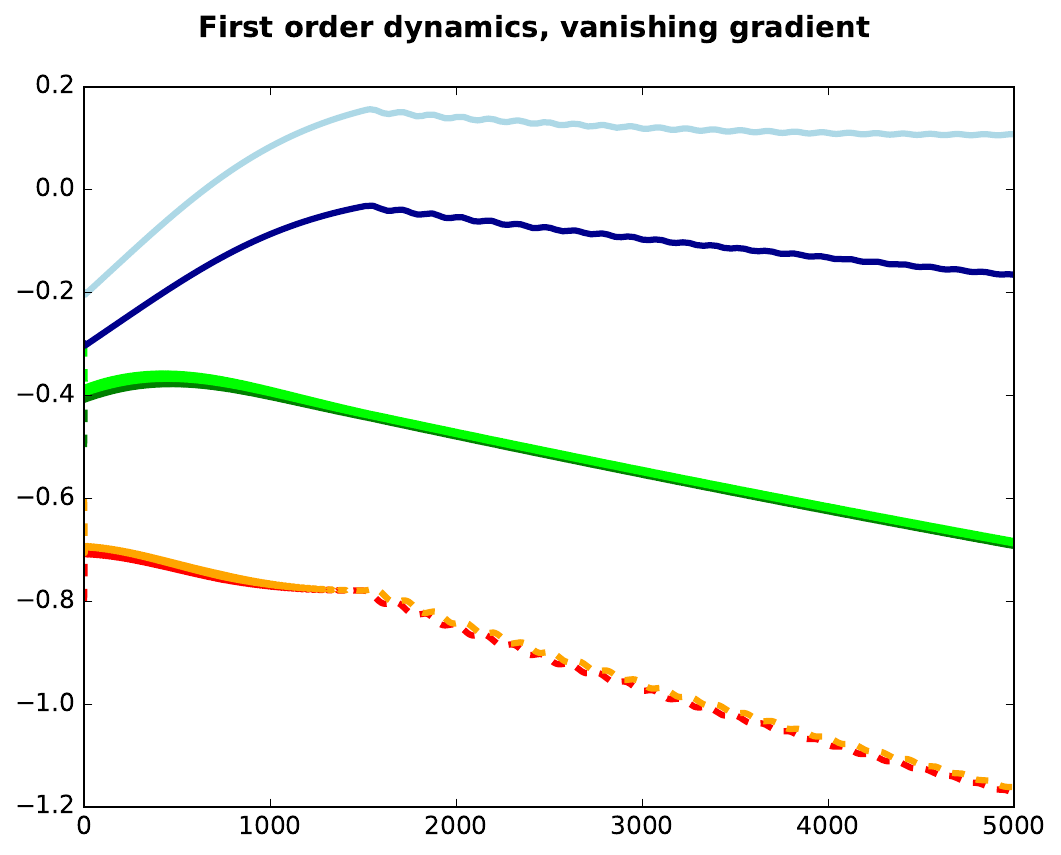}} 
\end{tabular}
\caption{A selection of different GAN behaviors. In all plots the true distribution was $G_{\mu^*}$ with $\mu^* = (-0.5, 0.5)$, and step size was taken to be $0.1$.
The solid lines represent the two coordinates of $\muhat$, and the dotted lines represent the discriminator intervals. In order:
(a) first order dynamics with initial conditions that converge to the true distribution. (b) First order dynamics with initial conditions that exhibit wild oscillation before mode collapse. (c) Optimal discriminator dynamics. (d) First order dynamics that exhibit vanishing gradients and converge to the wrong distribution. Observe that the optimal discriminator dynamics converge, and then the discriminator varies wildly, because the objective function is not differentiable at optimality. Despite this it remains roughly at optimality from step to step.}
\label{fig:gallery}
\end{figure*}

\section{Optimal Discriminator vs. First Order Dynamics}
\label{sec:theoryresults}
We now describe our results in more detail.
We first consider the dynamics induced by the optimal discriminator.
Our main theoretical result is\footnote{We actually analyze a minor variation on the optimal discriminator dynamics. In particular, we do not rule out the existence of a measure zero set on which the dynamics are ill-behaved.
Thus, we will analyze the optimal discriminator dynamics after adding an arbitrarily small amount of Gaussian noise.
It is clear that by taking this noise to be sufficiently small (say exponentially small) then we avoid this pathological set with probability 1, and moreover the noise does not otherwise affect the convergence analysis at all. 
For simplicity, we will ignore this issue for the rest of the paper.}:
\begin{theorem}
\label{thm:main}
Fix $\delta > 0$ sufficiently small and $C > 0$. 
Let $\mu^* \in \R^2$ so that $|\mu^*_i| \leq C$, and $|\mu^*_1 - \mu^*_2| \geq \delta$.
Then, for all initial points $\muhat^{(0)}$ so that $|\muhat^{(0)}_i | \leq C$ for all $i$ and so that $|\muhat^{(0)}_1 - \muhat^{(0)}_2| \geq \delta$, if we let $\eta = \poly (1 / \delta, e^{-C^2})$ and $T = \poly (1 / \delta, e^{-C^2})$, then if $\muhat^{(T)}$ is specified by the optimal discriminator dynamics, we have $d_{\TV} (G_{\mu^*}, G_{\muhat^{(T)}}) \leq \delta$.
\end{theorem}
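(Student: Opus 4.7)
The plan is to exploit the fact that plugging in the optimal discriminator reduces the objective to the total variation distance: by equation~\eqref{eq:Ak}, the function $f(\muhat) := \max_{\ell, r} L(\muhat, \ell, r)$ equals $1 + d_{\TV}(G_{\mu^*}, G_{\muhat})$, and by Danskin's (envelope) theorem, at any $\muhat$ where the maximizer $(\ell^*, r^*)$ is unique one has $\nabla f(\muhat) = \nabla_\mu L(\muhat, \ell^*, r^*)$. Hence the optimal-discriminator dynamics are literally gradient descent on $\muhat \mapsto d_{\TV}(G_{\mu^*}, G_{\muhat})$, and the task reduces to a convergence analysis for this non-convex, piecewise-smooth, two-dimensional flow.

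The first step would be to characterize $(\ell^*, r^*)$ explicitly. Letting $h(x) = G_{\mu^*}(x) - G_{\muhat}(x)$, the optimal event is $E^* = \{x : h(x) > 0\}$. Writing $h$ as $\int \phi(x - y)\,d\nu(y)$ for the signed atomic measure $\nu = \tfrac{1}{2}(\delta_{\mu^*_1} + \delta_{\mu^*_2} - \delta_{\muhat_1} - \delta_{\muhat_2})$ and invoking the variation-diminishing property of the Gaussian kernel, $h$ has at most three sign changes, so $E^*$ is a union of at most two intervals---matching the form of $\calD$ in~\eqref{eq:discs}---with endpoints $\ell^*(\muhat), r^*(\muhat)$ that are roots of $h$ and depend smoothly on $\muhat$ on the generic stratum.

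The second step would be to set up a convergence potential and perform case analysis. Using the swap symmetry of the labeling I assume $\mu^*_1 \leq \mu^*_2$ and track the sorted pair $(\muhat_{(1)}, \muhat_{(2)})$, defining $\Phi(\muhat) = (\muhat_{(1)} - \mu^*_1)^2 + (\muhat_{(2)} - \mu^*_2)^2$. The goal is to show that $\Phi$ decreases by at least $\poly(\delta, e^{-C^2})$ per step while $\Phi$ is larger than $O(\delta^2)$, after which Lipschitzness of $\mu \mapsto d_{\TV}(G_{\mu^*}, G_\mu)$ translates small $\Phi$ into small TV. A case analysis on the relative order of the four points $\{\muhat_1, \muhat_2, \mu^*_1, \mu^*_2\}$ and on which of the sign changes of $h$ are present yields, in each of a finite list of configurations, an explicit expression for $\nabla_\mu L(\muhat, \ell^*, r^*)$ as a signed sum of Gaussian densities evaluated at interval endpoints. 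Since one may truncate attention to $|\muhat_i| \leq C + O(\sqrt{\log(1/\delta)})$ (the complement contributes at most $\delta$ mass), those Gaussian densities are of magnitude $\poly(\delta, e^{-C^2})$, and a direct computation shows $\langle \nabla_\mu L, (\muhat_{(1)} - \mu^*_1, \muhat_{(2)} - \mu^*_2) \rangle \geq \poly(\delta, e^{-C^2}) \cdot \sqrt{\Phi(\muhat)}$. With $\eta_g = \poly(\delta, e^{-C^2})$ sufficiently small this yields $T = \poly(1/\delta, e^{-C^2})$.

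The hard part will be the case analysis together with ruling out spurious critical behavior. The main pitfall is the ``mode-collapsed'' diagonal $\muhat_1 = \muhat_2$: a priori the gradient could drive the iterates toward this diagonal, which is precisely why the separation hypotheses $|\mu^*_1 - \mu^*_2| \geq \delta$ and $|\muhat^{(0)}_1 - \muhat^{(0)}_2| \geq \delta$ appear in the statement, and one must verify case-by-case that the signed sum of endpoint densities pushes $\muhat$ toward $\mu^*$ rather than onto the diagonal, as well as that the iterates themselves never drift onto it. A secondary difficulty is the piecewise-smoothness of $f$ at boundaries in $\muhat$-space where the optimal interval structure changes: the step size must be calibrated so that each iterate crosses at most one such boundary, and the gradient field must be shown to be continuous across these boundaries so that no progress is lost in transitions.
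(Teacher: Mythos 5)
Your reduction to gradient descent on $f(\muhat)=d_{\TV}(G_{\mu^*},G_{\muhat})$ and your characterization of the optimal discriminator via the at-most-three sign changes of $G_{\mu^*}-G_{\muhat}$ both match the paper. But your convergence argument hinges on a claim the paper never proves and that I do not believe you can get by ``a direct computation'': the one-point-convexity inequality $\langle \nabla f(\muhat),\, \muhat-\mu^*\rangle \gtrsim \poly(\delta,e^{-C^2})\cdot\sqrt{\Phi(\muhat)}$, which would make the squared parameter distance $\Phi$ a monotone Lyapunov function. The obstruction is that the coordinate of the gradient corresponding to a mean that is already close to its target need not point toward that target: $\partial f/\partial\muhat_1$ is a signed sum of Gaussian densities evaluated at the discriminator endpoints, and those endpoints are determined by the global configuration (in particular by how far off $\muhat_2$ is), so when $\muhat_1\approx\mu^*_1$ but $\muhat_2$ is far away, the update can push $\muhat_1$ \emph{away} from $\mu^*_1$. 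The paper's own case analysis (Lemma~\ref{lem:gradbound}) only establishes that \emph{some} coordinate of the gradient is bounded below in magnitude --- a much weaker statement than correlation with the displacement --- and the paper correspondingly uses the function value $f$ itself as the potential: a gradient-norm lower bound (Lemma~\ref{lem:gradbound}) plus a quantitative bound on how much the gradient can change over one step (Lemma~\ref{lem:smoothbound}) are fed into a descent lemma for non-smooth Lipschitz functions (Lemma~\ref{lem:progress}), with mode collapse and divergence ruled out separately (Lemmas~\ref{lem:mode-collapse} and~\ref{lem:no-diverging}). If your correlation inequality did hold, your route would genuinely be simpler (monotonicity of $\Phi$ makes boundedness automatic and sidesteps the smoothness lemma), but as written it is an unsupported and doubtful assertion, and it is exactly where the difficulty lives.

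A secondary problem is your treatment of the strata boundaries. You propose to show ``the gradient field is continuous across these boundaries,'' but it is not: the optimal discriminator can jump discontinuously as $\muhat$ varies (zeros of $F$ appear, disappear, or escape to infinity), and the paper explicitly notes after Lemma~\ref{lem:gradbound} that the gradient is not continuous everywhere --- indeed that is why $f$ can have a nonvanishing gradient lower bound despite having a minimum. The paper's substitute for continuity is the quantitative estimate of Lemma~\ref{lem:smoothbound}, proved by lower-bounding derivatives of $F$ through the singular values of a Hermite Vandermonde matrix and converting that into control on how much measure the symmetric difference of two optimal discriminators can carry. Your proposal contains no machinery that could replace this, so even setting aside the correlation issue, the step-to-step stability of the dynamics is not established.
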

In other words, if the $\mu^*$ are bounded by a constant, and not too close together, then in time which is polynomial in the inverse of the desired accuracy $\delta$ and $e^{-C^2}$, where $C$ is a bound on how far apart the $\mu^*$ and $\muhat$ are, the optimal discriminator dynamics converge to the ground truth in total variation distance.
Note that the dependence on $e^{-C^2}$ is necessary, as if the $\muhat$ and $\mu^*$ are initially very far apart, then the initial gradients for the $\muhat$ will necessarily be of this scale as well.

On the other hand, we provide simulation results that demonstrate that first order updates, or more complicated heuristics such as unrolling, all fail to consistently converge to the true distribution, even under the same sorts of conditions as in Theorem \ref{thm:main}.
In Figure \ref{fig:gallery}, we gave some specific examples where the first order dynamics fail to converge.
In Section \ref{sec:experiments} we show that this sort of divergence is common, even with random initializations for the discriminators.
In particular, the probability of convergence is generally much lower than  $1$, for both the regular GAN dynamics, and unrolling. 
In general, we believe that this phenomena should occur for \emph{any} natural first order dynamics for the generator.
In particular, one barrier we observed for any such dynamics is something we call \emph{discriminator collapse}, that we describe in Section \ref{app:disc-collapse}.
We do not provide a proof of convergence for the first order dynamics, but we remark that in light of our simulation results, this is simply because the first order dynamics do not converge.

\subsection{Analyzing the Optimal Discriminator Dynamics}
\label{sec:analysis}

We provide now a high level overview of the proof of Theorem \ref{thm:main}.
The key element we will need in our proof is the ability to quantify the progress our updates make on converging towards the optimal solution. This is particularly challenging as our objective function is neither convex nor smooth. The following lemma is our main tool for achieving that. Roughly stated, it says that for any Lipschitz function, even if it is non-convex and {\em non-smooth}, as long as the change in its derivative is smaller in magnitude than the value of the derivative, gradient descent makes progress on the function value. 
Note that this condition is much weaker than typical assumptions used to analyze gradient descent. 
\begin{lemma}\label{lem:progress}
Let $g: \R^k \to \R$ be a Lipschitz function that is differentiable at some fixed $x \in \R^k$. For some $\eta > 0$,  let $x' = x - \eta \nabla f(x)$.
Suppose there exists $c < 1$ so that almost all $v \in L(x, x')$, where $L(x,y)$ denotes the line between $x$ and $y$, $g$ is differentiable, and moreover, we have $\| \nabla g(x) - \nabla g(v) \|_2 \leq c \| \nabla g(x) \|_2$.
Then 
$
g(x') - g(x) \leq -\eta (1 - c) \| \nabla g(x) \|_2^2 \; .
$
\end{lemma}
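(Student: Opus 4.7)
The plan is to reduce the claim to a one-dimensional calculation along the segment from $x$ to $x'$ and apply the fundamental theorem of calculus. Define $h(t) := g(x + t(x'-x))$ for $t \in [0,1]$. Since $g$ is Lipschitz on $\R^k$ and the parametrization is affine, $h$ is Lipschitz in $t$, hence absolutely continuous. At every $t$ at which $g$ is differentiable at $v_t := x + t(x'-x)$ --- by hypothesis, almost every $t$ --- the chain rule gives $h'(t) = \langle \nabla g(v_t),\, x'-x\rangle$, and this derivative is uniformly bounded. Absolute continuity therefore yields
\[
g(x') - g(x) \;=\; h(1) - h(0) \;=\; \int_0^1 \langle \nabla g(v_t),\, x'-x\rangle\, dt.
\]

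Next I would substitute $x' - x = -\eta \nabla g(x)$ and rewrite the integrand by the splitting $\nabla g(v_t) = \nabla g(x) - \bigl(\nabla g(x) - \nabla g(v_t)\bigr)$. Applying Cauchy--Schwarz to the second piece together with the hypothesis $\|\nabla g(x) - \nabla g(v_t)\|_2 \leq c\,\|\nabla g(x)\|_2$ gives, for a.e.\ $t$,
\[
\langle \nabla g(v_t),\, \nabla g(x)\rangle \;\geq\; \|\nabla g(x)\|_2^2 \;-\; c\,\|\nabla g(x)\|_2^2 \;=\; (1-c)\,\|\nabla g(x)\|_2^2.
\]
Integrating this bound over $t \in [0,1]$ and multiplying by $-\eta$ yields exactly $g(x') - g(x) \leq -\eta(1-c)\,\|\nabla g(x)\|_2^2$, which is the claim.

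The only real subtlety is justifying the fundamental theorem of calculus under the weak hypothesis that $g$ is merely almost-everywhere differentiable along the segment, rather than $C^1$ or smooth as in a textbook descent lemma. Here the global Lipschitzness of $g$ does all the work: it promotes $h$ to an absolutely continuous function on $[0,1]$, which is precisely the regularity needed to reconstruct $h(1) - h(0)$ from the integral of its a.e.\ defined derivative. Once that step is set up carefully the remainder is a one-line Cauchy--Schwarz estimate, so the main obstacle is conceptual rather than computational --- observing that the Lipschitz assumption together with the pointwise control $\|\nabla g(x) - \nabla g(v_t)\|_2 \leq c\,\|\nabla g(x)\|_2$ is strong enough to replace the smoothness hypothesis typically invoked in analyses of gradient descent.
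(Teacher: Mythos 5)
Your proof is correct and is the standard argument one would give here: parametrize the segment, use Lipschitzness to get absolute continuity of $h(t)=g(x+t(x'-x))$ so that the fundamental theorem of calculus applies with the a.e.\ chain rule, then bound the integrand by Cauchy--Schwarz. The paper itself omits the proof of this lemma, but your argument is precisely the intended one (note the statement's ``$\nabla f(x)$'' is a typo for ``$\nabla g(x)$'', which you correctly read past).
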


Here, we will use the convention that $\mu^*_1 \leq \mu^*_2$, and during the analysis, we will always assume for simplicity of notation that $\muhat_1 \leq \muhat_2$.
Also, in what follows, let
$f(\muhat) = f_{\mu^*}( \muhat) =  d_{\TV} (G_{\muhat}, G_{\mu^*}) $ and
$F(\muhat, x) = G_{\mu^*} (x) - G_{\muhat} (x) $
be the objective function and the difference of the PDFs between the true distribution and the generator, respectively.

For any $\delta > 0$, define the sets
\begin{align*}
\Rect(\delta) &= \{ \muhat: | \muhat_i - \mu^*_j| < \delta \mbox{ for some $i, j$}\} \\
\Uni(\delta) &= \{ \muhat: | \muhat_i - \mu^*_i| < \delta \mbox{ for all $i$}\} \; .
\end{align*}
to be the set of parameter values which have at least one parameter which is not too far from optimality, and the set of parameter values so that all parameter values are close.
We also let $B(C)$ denote the box of sidelength $C$ around the origin, and we let $\Sep(\gamma) = \{v \in \R^2: |v_1 - v_2 | > \gamma \}$ be the set of parameter vectors which are not too close together.

Our main work lies within a set of lemmas which allow us to instantiate the bounds in Lemma \ref{lem:progress}.
We first show a pair of lemmas which show that, explicitly excluding bad cases such as mode collapse, our dynamics satisfy the conditions of Lemma \ref{lem:progress}.
We do so by establishing a strong (in fact, nearly constant) lower bound on the gradient when we are fairly away from optimality (Lemma \ref{lem:gradbound}).
Then, we show a relatively weak bound on the smoothness of the function (Lemma \ref{lem:smoothbound}), but which is sufficiently strong in combination with Lemma \ref{lem:gradbound} to satisfy Lemma \ref{lem:progress}.
Finally, we rule out the pathological cases we explicitly excluded earlier, such as mode collapse or divergent behavior (Lemmas \ref{lem:mode-collapse} and \ref{lem:no-diverging}).
Putting all these together appropriately yields the desired statement.
Our first lemma is a lower bound on the gradient value:
\begin{lemma}
\label{lem:gradbound}
Fix $C \geq 1 \geq \gamma \geq \delta > 0$.
Suppose $\muhat \not\in \Rect (0)$, and suppose $\mu^*, \muhat \in B(C)$ and $\mu^*\in \Sep(\gamma), \muhat \in \Sep(\delta)$.
There is some $K = \Omega(1) \cdot (\delta e^{-C^2}/C)^{O(1)}$ so that $\| \nabla f_{\mu^*}(\muhat) \|_2 \geq K$.
\end{lemma}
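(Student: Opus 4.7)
The plan is to make the gradient explicit via an envelope-theorem argument, reduce it to a finite signed sum of Gaussian density values at the boundary of the optimal discriminator set, and then lower bound that sum through case analysis on a small number of geometric configurations. Concretely, by (\ref{eq:Ak}) we have $f_{\mu^*}(\muhat) = G_{\mu^*}(E^*) - G_{\muhat}(E^*)$, where $E^* = E^*(\muhat)$ is the maximizing union of at most two disjoint intervals. Since $E^*$ attains the maximum, Danskin's theorem gives $\nabla_{\muhat} f = -\nabla_{\muhat} G_{\muhat}(E^*)$, and the identity $\partial_{\muhat_i}\phi(x-\muhat_i) = -\partial_x\phi(x-\muhat_i)$ combined with the fundamental theorem of calculus converts this into a signed sum
\begin{equation*}
\frac{\partial f}{\partial \muhat_i}(\muhat) \;=\; \frac{1}{2}\sum_{p \in \partial E^*}\sigma(p)\,\phi(p - \muhat_i),
\end{equation*}
where $\sigma(p) = +1$ if $p$ is a right endpoint of a component of $E^*$ and $\sigma(p) = -1$ if $p$ is a left endpoint.

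Next I would pin down the structure of $\partial E^*$. Its finite points are interior zeros of $F(x) := G_{\mu^*}(x) - G_{\muhat}(x)$; factoring out the positive envelope $e^{-x^2/2}$, $F$ becomes a signed exponential sum of the form $c_1 e^{\mu^*_1 x} + c_2 e^{\mu^*_2 x} - c_3 e^{\muhat_1 x} - c_4 e^{\muhat_2 x}$ with all $c_i > 0$, so by Laguerre's generalization of Descartes' rule it has at most three real zeros. In particular $E^*$ has at most four finite boundary points. Moreover, any point at distance substantially greater than $C$ from both $\muhat$ and $\mu^*$ contributes a $\phi$-value already below the target bound, so up to negligible corrections all active boundary points lie in a slightly enlarged box $B(C')$ with $C' = C + O(\log(1/\delta))$, yielding a pointwise lower bound $\phi(p - \muhat_i) \geq e^{-O(C^2)} \cdot \poly(\delta)$.

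The heart of the argument is to show that the signed sum above cannot be simultaneously small in both coordinates. I would enumerate cases by the order type of $(\mu^*_1,\mu^*_2,\muhat_1,\muhat_2)$ together with the number of zeros of $F$ (two or three). In each configuration the hypotheses $\muhat \in \Sep(\delta)$, $\mu^* \in \Sep(\gamma)$, and $\muhat \notin \Rect(0)$ pin down which boundary points bracket each mean, and a direct computation identifies at least one coordinate $i$ for which two boundary points flanking $\muhat_i$ contribute to the sum with the same sign. A cancellation against the remaining two terms would require pairs of boundary points to nearly coincide, which is ruled out by the separation $\delta$; combined with the pointwise bound on $\phi$ from the previous step, this yields $|\partial_{\muhat_i} f| \geq (\delta e^{-C^2}/C)^{O(1)}$ and hence the claimed lower bound on $\|\nabla f\|_2$.

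The main obstacle is precisely this case analysis: the four-point boundary admits many order types relative to $\muhat$, and $E^*$ degenerates to a single interval whenever $F$ has only two zeros, requiring separate treatment (in which the gradient formula has only two nonzero summands, but the separation $\delta$ again prevents them from cancelling). A secondary subtlety is justifying Danskin's theorem on the measure-zero set where $E^*$ is non-unique, which is handled by the arbitrarily small Gaussian perturbation already invoked in the footnote of Theorem~\ref{thm:main}.
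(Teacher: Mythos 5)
Your overall architecture matches the paper's: the gradient is written explicitly as a signed sum of Gaussian density values at the finite endpoints of the optimal discriminator set, the number of such endpoints is controlled by bounding the zeros of $F(x)=G_{\mu^*}(x)-G_{\muhat}(x)$ by three (your route via factoring out $e^{-x^2/2}$ and Descartes/Laguerre is a legitimate alternative to the paper's Hummel--Gidas convolution argument), and one then does a case analysis on the order type of $(\mu^*_1,\mu^*_2,\muhat_1,\muhat_2)$. However, the mechanism you propose for the core quantitative step does not work in the hardest case, and this is a genuine gap. Take $\muhat_1<\mu^*_1<\mu^*_2<\muhat_2$ with the optimal discriminator a \emph{single} interval $[\ell,r]\supseteq[\mu^*_1,\mu^*_2]$ and $\ell<\muhat_1<\muhat_2<r$. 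Then each partial derivative has exactly two terms, $\frac{\partial f}{\partial\muhat_i}\propto e^{-(\muhat_i-r)^2/2}-e^{-(\muhat_i-\ell)^2/2}$: the two boundary points flanking $\muhat_i$ contribute with \emph{opposite} signs, and this expression vanishes exactly when $\muhat_i=(\ell+r)/2$, even though $\ell$ and $r$ are separated by at least $\gamma$. So neither your claim that some coordinate sees same-sign flanking contributions, nor your claim that cancellation forces boundary points to nearly coincide, holds here; a single coordinate of the gradient genuinely can be zero with well-separated endpoints.

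What rescues the lemma --- and what the paper does --- is a \emph{joint} argument across the two coordinates: the function $H(\mu)=e^{-(\ell-\mu)^2/2}-e^{-(r-\mu)^2/2}$ is strictly monotone on $[\ell,r]$ with $|H'(\mu)|\geq\frac{r-\ell}{2}e^{-(r-\ell)^2/8}\geq\frac{\gamma}{2}e^{-\gamma^2/8}$, so $|H(\muhat_1)-H(\muhat_2)|$ is bounded below using $|\muhat_1-\muhat_2|\geq\delta$, and hence at least one of the two partial derivatives is bounded away from zero. Your proposal needs this (or an equivalent two-coordinate pigeonhole) inserted in place of the endpoint-separation argument; the other configurations (e.g.\ both $\muhat_i$ on the same side of both $\mu^*_j$) are handled correctly by ordering/monotonicity of the endpoint contributions, much as you sketch. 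A secondary, minor issue: the localization of boundary points is simpler than you suggest, since all zeros of $F$ lie in the convex hull of the four means and hence already in $B(C)$.
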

The above lemma statement is slightly surprising at first glance. It says that the gradient is never $0$, which would suggest there are no local optima at all. To reconcile this, one should note that the gradient is not continuous (defined) everywhere. 

The second lemma states a bound on the smoothness of the function:
\begin{lemma}
\label{lem:smoothbound}
Fix $C \geq 1$ and $\gamma \geq \delta > 0$ so that $\delta$ is sufficiently small.
Let $\mu^*, \muhat, \muhat'$ be such that $L(\muhat, \muhat') \cap \Uni(\delta) = \emptyset$, $\mu^* \in \Sep (\gamma)$, $ \muhat', \muhat \in \Sep(\delta)$, and $\mu^*, \muhat, \muhat' \in B(C)$. Let $K=\Omega(1) \cdot (\delta e^{-C^2} /C)^{O(1)}$ be the $K$ for which Lemma~\ref{lem:gradbound} holds with those parameters. If we  have $\|\muhat' -\muhat\|_2 \leq \Omega(1) \cdot (\delta e^{-C^2}/C)^{O(1)} $ for appropriate choices of constants on the RHS, we get
\[
\|\nabla f_{\mu^*}(\muhat') - \nabla f_{\mu^*}(\muhat)\|_2 \leq K/2 \leq \|\nabla f_{\mu^*}(\muhat)\|_2 / 2.
\]
\end{lemma}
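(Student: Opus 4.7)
The plan is to represent $\nabla f_{\mu^*}(\muhat)$ as an integral over the optimal discriminator set and then separately bound the change in this integral due to the integrand moving and due to the integration domain moving. Writing $f_{\mu^*}(\muhat) = \int_{E^*(\muhat)} \bigl(G_{\muhat}(x)-G_{\mu^*}(x)\bigr)\,dx$ with $E^*(\muhat) = \{x : G_{\muhat}(x) > G_{\mu^*}(x)\}$, and noting that the integrand $-F(\muhat,\cdot)$ vanishes on $\partial E^*(\muhat)$, the envelope theorem yields
\[
\nabla f_{\mu^*}(\muhat) \;=\; \int_{E^*(\muhat)} \nabla_{\muhat} G_{\muhat}(x)\,dx,
\]
where each component $\partial_{\muhat_i} G_{\muhat}(x) = \tfrac{1}{2\sqrt{2\pi}}(x-\muhat_i)\,e^{-(x-\muhat_i)^2/2}$ is smooth and uniformly bounded in $(\muhat,x)$.

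I would then write $\nabla f_{\mu^*}(\muhat')-\nabla f_{\mu^*}(\muhat)$ as the sum of an \emph{interior} term $\int_{E^*(\muhat')}[\nabla G_{\muhat'}-\nabla G_{\muhat}]\,dx$ and a \emph{boundary} term $\int_{E^*(\muhat')\setminus E^*(\muhat)}\nabla G_{\muhat}\,dx - \int_{E^*(\muhat)\setminus E^*(\muhat')}\nabla G_{\muhat}\,dx$. The interior term is controlled by the fact that $\nabla_{\muhat} G_{\muhat}(x)$ is Lipschitz in $\muhat$ with constant $\mathrm{poly}(C)$ on the effective support $|x| \leq C + O(\sqrt{\log(1/\delta)})$, with Gaussian tails outside this window contributing at most $O(\delta)$; this yields an interior bound of $O(\|\muhat-\muhat'\|_2)\cdot\mathrm{poly}(C)$. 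The boundary term requires tracking the endpoints of $E^*(\muhat)$, which are zeros of $F(\muhat,x)=G_{\mu^*}(x)-G_{\muhat}(x)$. Since $F(\muhat,\cdot)$ is a signed sum of four unit-variance Gaussians, it has a bounded number of sign changes in $x$, and I would prove a quantitative transversality bound $|\partial_x F(\muhat,x_0)|\geq \beta$ with $\beta = \mathrm{poly}(\delta e^{-C^2}/C)$ at every zero $x_0$. The implicit function theorem then bounds the displacement of each endpoint by $O(\|\muhat-\muhat'\|_2/\beta)$ as $\muhat$ traverses $L(\muhat,\muhat')$; combined with $|\nabla_{\muhat}G|=O(1)$ and a constant number of endpoints, this gives a boundary contribution of $O(\|\muhat-\muhat'\|_2/\beta)$. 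Summing the two pieces and choosing $\|\muhat-\muhat'\|_2 \leq c\,\beta K$ for a sufficiently small constant $c$ yields the claimed $K/2$ bound.

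The main obstacle is the quantitative transversality estimate $|\partial_x F(\muhat,x_0)|\geq \beta$ at every zero $x_0$, uniformly for $\muhat$ along the segment $L(\muhat,\muhat')$. The hypothesis $L(\muhat,\muhat')\cap\Uni(\delta)=\emptyset$ is critical here: inside $\Uni(\delta)$ the function $F$ can be pointwise tiny and its zeros degenerate, whereas outside I would leverage the $\Sep$ conditions together with the fact that $\muhat\notin\Uni(\delta)$ to force at least one mean-pair among $\{\muhat_i,\mu^*_j\}$ to be $\delta$-separated, so that the four Gaussian bumps composing $F$ cannot cancel in a neighborhood of any crossing. Turning this qualitative non-cancellation into the explicit slope bound $\beta = \mathrm{poly}(\delta e^{-C^2}/C)$, and simultaneously verifying that the combinatorial structure (the number and indexing) of the endpoints of $E^*(\muhat)$ remains constant across the whole segment --- so that the interior/boundary decomposition above is even well-defined --- is where the bulk of the case analysis will lie.
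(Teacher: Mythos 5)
Your decomposition into an interior term and a boundary term is natural, and the interior term is handled correctly. The gap is in the boundary term: your entire control of it rests on a transversality estimate $|\partial_x F(\muhat,x_0)| \geq \beta$ at \emph{every} zero $x_0$ of $F(\muhat,\cdot)$, uniformly along the segment $L(\muhat,\muhat')$, together with the assumption that the combinatorial structure of $E^*(\muhat)$ is constant along the segment. Neither of these follows from the hypotheses, and both can fail. The hypotheses ($\muhat \notin \Uni(\delta)$, the $\Sep$ and $B(C)$ conditions) only guarantee that at each point $x$ \emph{some} derivative of $F(\muhat,\cdot)$ of order $0$ through $3$ is bounded below (this is Lemma~\ref{lem:bigderivs} in the paper); at a zero of $F$ the $0$th derivative vanishes, and it may well be the second or third derivative that is large while $\partial_x F$ is arbitrarily small. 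That is exactly the situation of a tangential (near-double) zero, in which the zero moves like a fractional power of $\|\muhat-\muhat'\|$ rather than linearly, and zeros can merge, split, or disappear, so the optimal discriminator intervals move discontinuously. Such degenerate configurations occur on a codimension-one set of generator parameters that is not excluded by $\Rect$, $\Uni$, $\Sep$, or $B(C)$, so the implicit-function-theorem step is not available and the ``indexing of endpoints'' you hope to verify is genuinely not preserved.

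The paper circumvents this by never tracking individual zeros. It bounds the \emph{measure} of the sign-disagreement set $Z^{\pm} = \{x : F(\muhat,x) \text{ and } F(\muhat',x) \text{ have opposite nonzero signs}\}$, which is a union of $O(1)$ intervals by the zero-counting theorem for linear combinations of Gaussians. On each such interval the $F$-mass is at most $O(\|\muhat-\muhat'\|_1)$ by Lipschitzness of the densities in the means; conversely, an anti-concentration lemma (Lemma~\ref{lem:area-specific}, built from the ``some derivative of order $\le 3$ is large'' statement plus a Markov-type bound on polynomial coefficients) shows that any interval of non-negligible length on which $F$ has constant sign must carry at least $\mathrm{poly}(\delta e^{-C^2}/C)$ times a power of its length in mass. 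Comparing the two bounds caps the length of each interval by a fractional power of $\|\muhat-\muhat'\|$, and the gradient difference is then at most $\vol(Z^{\pm})$ times a bounded integrand, plus an $O(K/100)$ contribution from Gaussian tails outside a window of width $O(\sqrt{\log(1/K)})$. This measure-theoretic route is robust to degenerate zeros, which is precisely what your pointwise implicit-function argument cannot handle. To repair your proof you would need to replace the transversality step with an argument of this type (or otherwise prove that degenerate zeros contribute only a small-measure, hence small-gradient, discrepancy).
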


These two lemmas almost suffice to prove progress as in Lemma \ref{lem:progress}, however, there is a major caveat.
Specifically, Lemma \ref{lem:smoothbound} needs to assume that $\muhat$ and $\muhat'$ are sufficiently well-separated, and that they are bounded.
While the $\muhat_i$ start out separated and bounded, it is not clear that it does not mode collapse or diverge off to infinity.
However, we are able to rule these sorts of behaviors out.
Formally:
\begin{lemma}[No mode collapse]
\label{lem:mode-collapse}
Fix $\gamma > 0$, and let $\delta$ be sufficiently small.
Let $\eta \leq \delta / C$ for some $C$ large.
Suppose $\mu^* \in \Sep(\gamma)$.
Then, if $\muhat \in \Sep (\delta)$, and $\muhat' = \muhat - \eta \nabla f_{\mu^*} (\muhat)$, we have $\muhat' \in \Sep (\delta)$.
\end{lemma}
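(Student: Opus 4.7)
The plan is to control the change in separation $\Delta := \muhat_2 - \muhat_1$ over a single gradient step and show that it cannot drop from above $\delta$ to $\delta$ or below. WLOG take $\muhat_1 \leq \muhat_2$ and $\mu^*_1 \leq \mu^*_2$; by hypothesis $\Delta > \delta$. From the update rule,
\[
\Delta' := \muhat'_2 - \muhat'_1 \;=\; \Delta \;-\; \eta\bigl(\partial_{\muhat_2} f_{\mu^*}(\muhat) - \partial_{\muhat_1} f_{\mu^*}(\muhat)\bigr),
\]
so the proof reduces entirely to bounding the gradient difference on the right.

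First I would invoke the envelope (Danskin) theorem for $f_{\mu^*}(\muhat) = \max_{\ell,r} L(\muhat,\ell,r) - 1$: at the (generically unique) optimal discriminator $(\ell^*,r^*)$, a direct computation gives
\[
\partial_{\muhat_i} f_{\mu^*}(\muhat) \;=\; -\tfrac{1}{2}\int_{D^*} h(x - \muhat_i)\,dx, \qquad h(y) := y\,\phi(y),
\]
where $D^* = [\ell^*_1,r^*_1]\cup[\ell^*_2,r^*_2]$ and $\phi$ is the standard Gaussian density. By the fundamental theorem of calculus,
\[
\partial_{\muhat_2} f_{\mu^*} - \partial_{\muhat_1} f_{\mu^*} \;=\; \tfrac{1}{2}\int_{D^*}\!\int_0^{\Delta}\! h'(x - \muhat_1 - t)\,dt\,dx,
\]
with $h'(y) = (1-y^2)\phi(y)$ uniformly bounded by $\phi(0)$.

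Next I would use the boundedness hypothesis $\mu^*,\muhat \in B(C)$ together with the rapid Gaussian decay to argue that, up to negligible tails, $D^*$ may be restricted to an $O(C)$-length window. This yields a uniform estimate $|\partial_{\muhat_2} f_{\mu^*} - \partial_{\muhat_1} f_{\mu^*}| \leq O(C\,\Delta)$. Plugging this into the update and choosing the absolute constant hidden in $\eta \leq \delta/C$ sufficiently large gives $|\Delta' - \Delta| \leq O(\delta\,\Delta)$, which comfortably yields $\Delta' > \delta$ whenever $\Delta$ is bounded away from $\delta$, say $\Delta \geq 2\delta$.

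The main obstacle is the narrow boundary regime $\delta < \Delta < 2\delta$, where the slack $\Delta - \delta$ can be arbitrarily small and the crude bound above is too weak. Here I would exploit the sign structure of $h'$---positive on $|y|<1$, negative on $|y|>1$---combined with the gap $\mu^* \in \Sep(\gamma)$: when $\muhat_1, \muhat_2$ are close to collapsing, the optimal $D^*$ must sit near the $\mu^*_j$'s rather than between $\muhat_1$ and $\muhat_2$, forcing cancellations within each subinterval of $D^*$. A case analysis on whether both $\muhat_i$ lie near the same $\mu^*_j$ or straddle the midpoint shows that either $\partial_{\muhat_2} f_{\mu^*} \leq \partial_{\muhat_1} f_{\mu^*}$ directly (so the gap only widens), or the residual imbalance admits a refined cancellation bound of the form $|\partial_{\muhat_2} f_{\mu^*} - \partial_{\muhat_1} f_{\mu^*}| = o(1)\cdot(\Delta - \delta)/\eta$, which together with $\eta \leq \delta/C$ preserves $\Delta' > \delta$. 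Nailing down this cancellation via the explicit geometry of the optimal $\mathcal{A}_2$-intervals is the crux of the argument.
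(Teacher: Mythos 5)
Your reduction of the problem to the sign and size of $\partial_{\muhat_2} f_{\mu^*} - \partial_{\muhat_1} f_{\mu^*}$ is the right starting point, and your first step (a crude Lipschitz-type bound that disposes of the regime $\Delta \geq 2\delta$) matches the paper's opening move, which dismisses $|\muhat_1-\muhat_2| > 2\delta$ using only Lipschitzness of $f$ and $\eta \leq \delta/C$. But the entire content of the lemma lives in the boundary regime $\delta < \Delta \leq 2\delta$, and there your argument is a sketch of a plan rather than a proof. The paper handles this regime by a six-way case analysis on the ordering of $\muhat_1,\muhat_2,\mu^*_1,\mu^*_2$ (one case is vacuous because $\Delta \leq 2\delta \ll \gamma$ while $\mu^*\in\Sep(\gamma)$), and in each surviving case it pins down the exact interval structure of the optimal discriminator --- e.g.\ $(-\infty,r]\cup[\ell,\infty)$ with $r\leq\muhat_1\leq\muhat_2\leq\ell$, or a single ray $(-\infty,m]$ with $m\leq\muhat_1$ --- using the bound on the number of zeros of a difference of Gaussian mixtures (the Hummel--Gidas argument) plus sign analysis of $F(\muhat,\cdot)$ at $\pm\infty$ and between the means. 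Once the discriminator's endpoints are located relative to $\muhat_1,\muhat_2$, the endpoint form of the gradient, $\partial f/\partial\muhat_i \propto e^{-(\muhat_i-r)^2/2}-e^{-(\muhat_i-\ell)^2/2}$, is monotone in $\muhat_i$ over the relevant range, which shows the gap $\Delta$ is actually \emph{non-decreasing} in every case. That sign statement is what makes the lemma true arbitrarily close to the threshold.

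Your proposed second branch for the boundary regime --- a cancellation bound of the form $|\partial_{\muhat_2}f-\partial_{\muhat_1}f| = o(1)\cdot(\Delta-\delta)/\eta$ --- cannot be correct as stated: $\delta$ is an arbitrary cutoff with no special role in the dynamics, so there is no mechanism forcing the gradient difference to vanish as $\Delta\to\delta^+$. The true statement is the first branch of your dichotomy ($\partial_{\muhat_2}f\leq\partial_{\muhat_1}f$, so the gap widens), and it holds in \emph{all} the near-collapse configurations, but establishing it is exactly the hard part you have deferred: it requires (i) proving the optimal discriminator has the claimed shape in each configuration, which in the case $\mu^*_1\leq\muhat_1\leq\mu^*_2\leq\muhat_2$ involves a delicate localization of the unique zero of $F$ to a small window around $(\mu^*_1+\muhat_1)/2$ and a Taylor-expansion argument that $F$ is strictly decreasing there, and (ii) the monotonicity computation above. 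As written, the proposal assumes the conclusion of the case analysis rather than carrying it out, so there is a genuine gap.
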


\begin{lemma}[No diverging to infinity]
\label{lem:no-diverging}
Let $C > 0$ be sufficiently large, and let $\eta > 0$ be sufficiently small.
Suppose $\mu^* \in B(C) $, and $\muhat \in B(2C)$.
Then, if we let $\muhat' = \muhat - \eta \nabla f_{\mu^*} (\muhat)$, then $\muhat' \in B(2C)$.
\end{lemma}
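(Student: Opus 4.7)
The plan is to show coordinate-wise that $|\muhat'_i| \le 2C$, splitting on whether $\muhat_i$ lies safely in the interior of $[-2C, 2C]$ or close to its boundary. In the interior case, a uniform bound on the gradient combined with small step size $\eta$ trivially suffices; in the boundary case, one instead shows that the gradient of $f_{\mu^*}$ points strictly back toward the interior, exploiting the fact that a generator mode that has strayed far past all of $\mu^*$'s mass induces a lopsided optimal discriminator.

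First I would invoke the envelope theorem: if $E = [\ell_1^*, r_1^*] \cup [\ell_2^*, r_2^*]$ denotes the optimal discriminator set (where $G_{\mu^*} \ge G_\muhat$), and $\phi$ is the standard normal density, then
\[
\partial_{\muhat_i} f_{\mu^*}(\muhat) \;=\; -\tfrac{1}{2}\int_E (x-\muhat_i)\,\phi(x-\muhat_i)\,dx.
\]
Enlarging the integration domain to all of $\R$ and evaluating $\int_\R |y|\phi(y)\,dy = \sqrt{2/\pi}$ gives a universal bound $|\partial_{\muhat_i} f_{\mu^*}(\muhat)| \le M$ for an absolute constant $M$. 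Consequently, whenever $|\muhat_i| \le 2C - \eta M$, one immediately has $|\muhat'_i| \le 2C$.

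The main case, which by symmetry I treat as $\muhat_i \ge 2C - \eta M$, is where the real work lies. Choosing $\eta$ small enough that $\eta M \le C/2$ and $C$ large enough, this bound forces $\muhat_i \ge 3C/2$, and hence $\muhat_i - \mu^*_j \ge C/2$ for each $j$, since $\mu^* \in B(C)$. I would then establish the key claim that $E \subseteq (-\infty,\muhat_i]$; given this, $x - \muhat_i \le 0$ throughout $E$ gives $\partial_{\muhat_i} f_{\mu^*}(\muhat) \ge 0$, so $\muhat'_i \le \muhat_i \le 2C$, while the uniform gradient bound gives $\muhat'_i \ge \muhat_i - \eta M \ge C \ge -2C$.

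The one real calculation is the key claim, which amounts to $G_\muhat(x) > G_{\mu^*}(x)$ for every $x \ge \muhat_i$. For this I would discard one mode of $G_\muhat$ entirely and use $G_\muhat(x) \ge \tfrac{1}{2}\phi(x-\muhat_i)$ (valid no matter where $\muhat_{3-i}$ sits inside $B(2C)$), and combine it with the Gaussian tail ratio
\[
\frac{\phi(x-\mu^*_j)}{\phi(x-\muhat_i)} \;=\; \exp\!\bigl(-\tfrac{1}{2}(\muhat_i-\mu^*_j)(2x - \muhat_i - \mu^*_j)\bigr) \;\le\; e^{-(\muhat_i-\mu^*_j)^2/2} \;\le\; e^{-C^2/8}
\]
for $x \ge \muhat_i$, yielding $G_{\mu^*}(x) \le \phi(x-\muhat_i)\,e^{-C^2/8}$. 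For $C$ large enough that $e^{-C^2/8} < 1/2$, the two bounds imply the claim. The main obstacle is precisely this step, since $\muhat_{3-i}$ could be located anywhere inside $B(2C)$; the point that makes it work is that throwing the $\muhat_{3-i}$ mode away only loses a factor of two, which the exponential Gaussian gain from $\muhat_i - \mu^*_j \ge C/2$ easily absorbs once $C$ is a suitable absolute constant.
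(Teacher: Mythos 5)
Your proposal is correct, and it takes a genuinely different (and leaner) route than the paper's own argument. The paper first reduces to the case that some coordinate exceeds $C$ in magnitude, takes WLOG $\muhat_2>0$ largest, and then runs a three-way case analysis on where $\muhat_1$ sits relative to $\mu^*_1,\mu^*_2$; in each case it identifies the qualitative shape of the optimal discriminator (using the bounded-zeros theorem and the tail lemma) and reads off the sign of the update to the offending coordinate. You instead handle each coordinate independently: a uniform bound $M$ on $\|\nabla f_{\mu^*}\|_\infty$ disposes of any coordinate with $|\muhat_i|\le 2C-\eta M$, and for a coordinate with $\muhat_i\ge 3C/2$ you prove the single clean claim that $G_{\muhat}>G_{\mu^*}$ on $[\muhat_i,\infty)$ by dropping the other generator mode ($G_{\muhat}\ge\tfrac12\phi(x-\muhat_i)$) and using the tail ratio $\phi(x-\mu^*_j)/\phi(x-\muhat_i)\le e^{-C^2/8}$; this forces the entire positive part of $F$ (hence the optimal discriminator) to lie in $(-\infty,\muhat_i]$, so the envelope-theorem gradient $-\tfrac12\int_E(x-\muhat_i)\phi(x-\muhat_i)\,dx$ is nonnegative and the coordinate cannot increase. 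I verified the sign conventions and the tail computation; both are right, and the argument is insensitive to where the other coordinate $\muhat_{3-i}$ sits in $B(2C)$, which is exactly the point where the paper resorts to cases. What your approach buys is a shorter, self-contained proof that does not invoke Theorem~\ref{thm:3zeros} or Lemma~\ref{lem:tail}; what the paper's casework buys is finer information about the discriminator's interval structure that it reuses elsewhere (e.g., in the mode-collapse lemma). Both require only $C$ above an absolute constant and $\eta$ small relative to $C$, matching the lemma's hypotheses.
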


Together, these four lemmas together suffice to prove Theorem \ref{thm:main} by setting parameters appropriately.
We refer the reader to the supplementary material for more details including the proofs.

\section{Experiments}
\label{sec:experiments}
To illustrate more conclusively that the phenomena demonstrated in Figure \ref{fig:gallery} are not particularly rare, and that first order dynamics do often fail to converge, we also conducted the following heatmap experiments.
We set $\mu^* = (-0.5, 0.5)$  as in Figure \ref{fig:gallery}.
We then set a grid for the $\muhat$, so that each coordinate is allowed to vary from $-1$ to $1$.
For each of these grid points, we randomly chose a set of initial discriminator intervals, and ran the first order dynamics for 3000 iterations, with constant stepsize $0.3$.
We then repeated this 120 times for each grid point, and plotted the probability that the generator converged to the truth, where we say the generator converged to the truth if the $\TV$ distance between the generator and optimality is $< 0.1$.
The choice of these parameters was somewhat arbitrary, however, we did not observe any qualitative difference in the results by varying these numbers, and so we only report results for these parameters.
We also did the same thing for the optimal discriminator dynamics, and for unrolled discriminator dynamics with 5 unrolling steps, as described in \citep{MPPS17}, which attempt to match the optimal discriminator dynamics.
\begin{figure*}[h!!]
\centering
\includegraphics[width=.66\columnwidth]{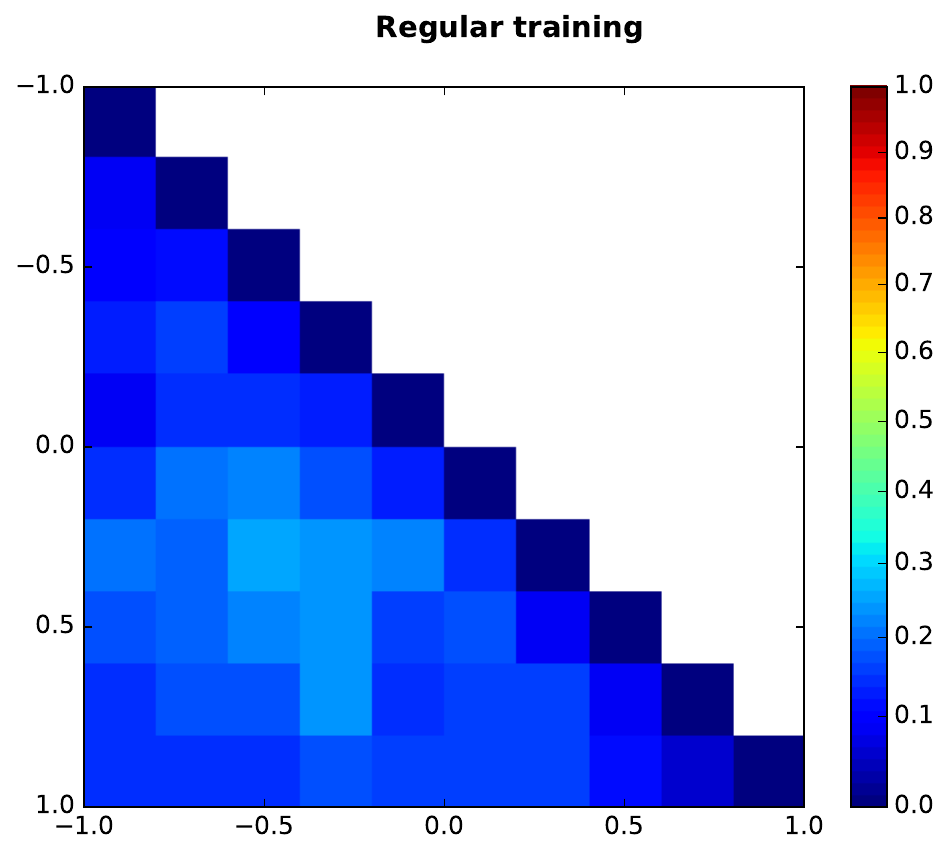} \includegraphics[width=.66\columnwidth]{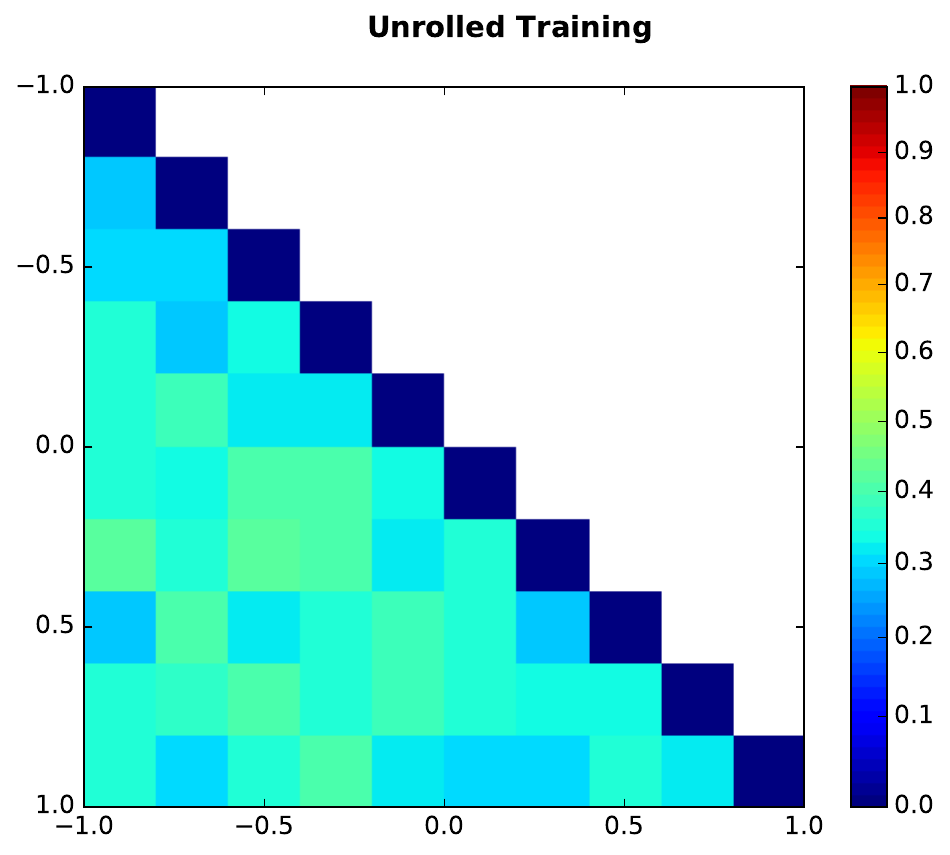} \includegraphics[width=.66\columnwidth]{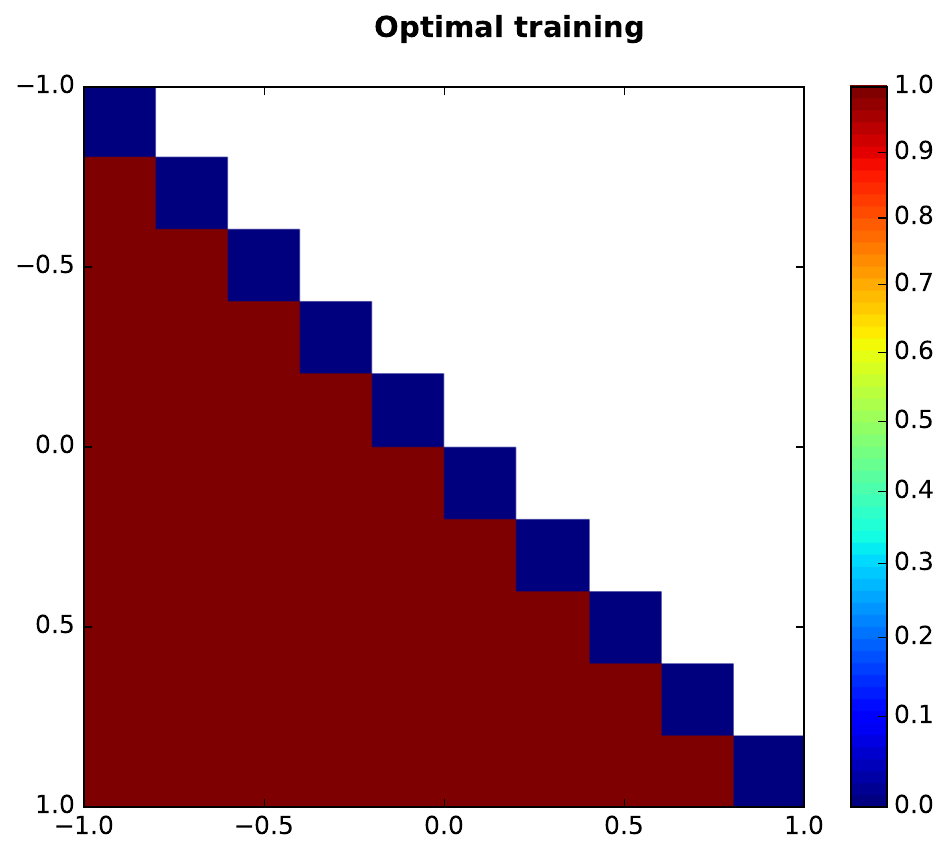}
\caption{Heatmap of success probability for random discriminator initialization for regular GAN training, unrolled GAN training, and optimal discriminator dynamics.}
\label{fig:heatmap}
\end{figure*}

The results of the experiment are given in Figure \ref{fig:heatmap}.
We see that all three methods fail when we initialize the two generator means to be the same.
This makes sense, since in that regime, the generator starts out mode collapsed and it is impossible for it to un-``mode collapse'', so it cannot fit the true distribution well.
Ignoring this pathology, we see that the optimal discriminator otherwise always converges to the ground truth, as our theory predicts.
On the other hand, both regular first order dynamics and unrolled dynamics often times fail, although unrolled dynamics do succeed more often than regular first order dynamics.
This suggests that the pathologies in Figure \ref{fig:gallery} are not so rare, and that these first order methods are quite often unable to emulate optimal discriminator dynamics.

\section{Why do first order methods get stuck?}\label{app:disc-collapse}

\begin{figure*}[h!]
\centering
\includegraphics[scale=0.33]{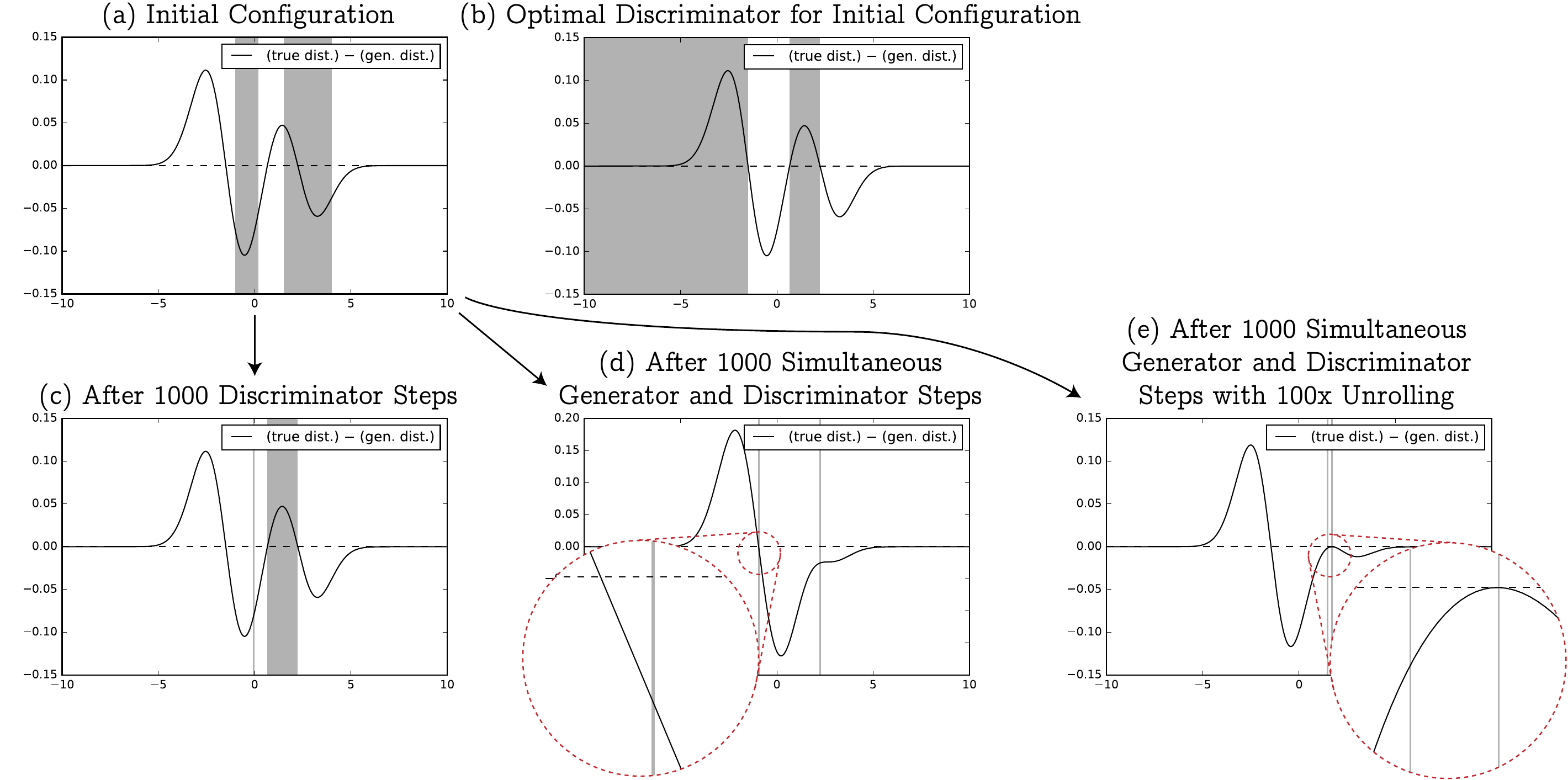}
\caption{Example of Discriminator Collapse. The initial configuration has $\mu^*=\{-2,2\}$, $\muhat=\{-1, 2.5\}$, left discriminator $[-1,0.2]$, and right discriminator $[-1,2.5]$. The (multiplicative) step size used to generate (c), (d), and (e) was $0.3$. 
\label{fig:disc-collapse}
}
\end{figure*}

As discussed above, our simple GAN dynamics are able to capture the same undesired behaviors that more sophisticated GANs exhibit. In addition to these behaviors, our dynamics enables us to discern another degenerate behavior which does not seem to have previously been observed in the literature. We call this behavior {\em discriminator collapse}. 
At a high level, this phenomenon is when the local optimization landscape around the current discriminator encourages it to make updates which decrease its representational power.
We view understanding the exact nature of discriminator collapse in more general settings and interesting research problem to explore further.

We explain this phenomenon using language specific to our GMM-GAN dynamics. In our dynamics, discriminator collapse occurs when a discriminator interval which originally had finite width is forced by the dynamics to have its width converge to $0$. This happens whenever this interval lies entirely in a region where the generator PDF is much larger than the discriminator PDF. We will shortly argue why this is undesirable.

In Figure~\ref{fig:disc-collapse}, we show an example of discriminator collapse in our dynamics. Each plot in the figure shows the true PDF minus the PDF of the generators, where the regions covered by the discriminator are shaded. 
Plot (a) shows the initial configuration of our example. Notice that the leftmost discriminator interval lies entirely in a region for which the true PDF minus the generators' PDF is negative. Since the discriminator is incentivized to only have mass on regions where the difference is positive, the first order dynamics will cause the discriminator interval to collapse to have length zero if it is in a negative region. 
We see in Plot (c) that this discriminator collapses if we run many discriminator steps for this fixed generator.
In particular, these steps do not converge to the globally optimal discriminator shown in Plot (b).

This collapse also occurs when we run the dynamics.
In Plots (d) and (e), we see that after running the first order dynamics -- or even unrolled dynamics -- for many iterations, eventually {\em both} discriminators collapse.
When a discriminator interval has length zero, it can never uncollapse, and moreover, its contribution to the gradient of the generator is zero.
Thus these dynamics will never converge to the ground truth.


\section{Related Work}
GANs have received a tremendous amount of attention over the past two years \citep{Goodfellow17}.
Hence we only compare our results to the most closely related papers here.

The recent paper \citep{AGLMZ17} studies generalization aspects of GANs and the existence of equilibria in the two-player game.
In contrast, our paper focuses on the \emph{dynamics} of GAN training.
We provide the first rigorous proof of global convergence and show that a GAN with an optimal discriminator always converges to an approximate equilibrium.

One recently proposed method for improving the convergence of GAN dynamics is the unrolled GAN \citep{MPPS17}.
The paper proposes to ``unroll'' multiple discriminator gradient steps in the generator loss function.
The authors argue that this improves the GAN dynamics by bringing the discriminator closer to an optimal discriminator response.
Our experiments show that this is not a perfect approximation: the unrolled GAN still fails to converge in multiple initial configurations (however, it does converge more often than a ``vanilla'' one-step discriminator).

The authors of \citep{AB17} also take a theoretical view on GANs.
They identify two important properties of GAN dynamics: (i) Absolute continuity of the population distribution, and (ii) overlapping support between the population and generator distribution.
If these conditions do not hold, they show that the GAN dynamics fail to converge in some settings.
However, they do not prove that the GAN dynamics \emph{do} converge under such assumptions.
We take a complementary view: we give a convergence proof for a concrete GAN dynamics.
Moreover, our model shows that absolute continuity and support overlap are not the only important aspects in GAN dynamics: although our distributions clearly satisfy both of their conditions, the first-order dynamics still fail to converge.

The paper \citep{NK17} studies the stability of equilibria in GAN training.
In contrast to our work, the results focus on \emph{local} stability while we establish \emph{global} convergence results.
Moreover, their theorems rely on fairly strong assumptions.
While the authors give a concrete model for which these assumptions are satisfied (the linear quadratic Gaussian GAN), the corresponding target and generator distributions are \emph{unimodal}.
Hence this model cannot exhibit mode collapse.
We propose the GMM-GAN specifically because it is rich enough to exhibit mode collapse.

The recent work \citep{GLLHK17} views GAN training through the lens of online learning.
The paper gives results for the game-theoretic minimax formulation based on results from online learning.
The authors give results that go beyond the convex-concave setting, but do not address generalization questions.
Moreover, their algorithm is not based on gradient descent (in contrast to essentially all practical GAN training) and relies on an oracle for minimizing the highly non-convex generator loss.
This viewpoint is complementary to our approach.
We establish results for learning the unknown distribution and analyze the commonly used gradient descent approach for learning GANs.


\section{Conclusions}
We haven taken a step towards a principled understanding of GAN dynamics.
We define a simple yet rich model of GAN training and prove convergence of the corresponding dynamics.
To the best of our knowledge, our work is the first to establish global convergence guarantees for a parametric GAN.
We find an interesting dichotomy:
If we take optimal discriminator steps, the training dynamics provably converge.
In contrast, we show experimentally that the dynamics often fail if we take first order discriminator steps.
We believe that our results provide new insights into GAN training and point towards a rich algorithmic landscape to be explored in order to further understand GAN dynamics.



\section*{Acknowledgements}

Jerry Li was supported by NSF Award CCF-1453261 (CAREER), CCF-1565235, a Google Faculty Research Award, and an NSF Graduate Research Fellowship. 
Aleksander M\k{a}dry was supported in part by an Alfred P.~Sloan Research Fellowship, a Google Research Award, and the NSF grant CCF-1553428.
John Peebles was supported by the NSF Graduate Research Fellowship under Grant No.\ 1122374 and by the NSF Grant No.\ 1065125.
Ludwig Schmidt was supported by a Google PhD Fellowship.

\newpage
\bibliographystyle{icml2018}
\bibliography{biblio}

\clearpage
\appendix


\section{Omitted details from Section \ref{sec:dynamics}}
\label{app:proofs}

\subsection{Two intervals suffice for $\calG$}
\label{app:ak}
Here we formally prove (\ref{eq:Ak}).
In fact, we will prove a slight generalization of this fact which will be useful later on.

We require the following theorem from Hummel and Gidas:
\begin{theorem}[\citep{HG84}]\label{thm:gauss-conv-reduces-zeros}
Let $f$ be any analytic function with at most $n$ zeros.
Then $f \circ \normal (0, \sigma^2)$ has at most $n$ zeros.
\end{theorem}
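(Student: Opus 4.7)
The plan is to realize $f \circ \normal(0,\sigma^2) = f * \phi_\sigma$ (where $\phi_\sigma$ is the Gaussian density) as the time-$\sigma^2$ slice of the heat flow with initial data $f$, and then invoke the classical Sturmian fact that the zero count of a solution of the one-dimensional heat equation is non-increasing in time. Concretely, for $s>0$ let $\phi_s(y) = \frac{1}{\sqrt{2\pi s}}\, e^{-y^2/(2s)}$, and set
\[
u(x, t) \;=\; (f * \phi_t)(x) \quad \text{for } t > 0, \qquad u(x, 0) \;=\; f(x),
\]
so that $u$ solves $\partial_t u = \tfrac{1}{2} \partial_{xx} u$ on $\R \times (0, \infty)$ with $u(\cdot, \sigma^2) = f * \phi_{\sigma^2}$. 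Since $f$ is real analytic (and in the paper's application has Schwartz-type decay, being a finite linear combination of Gaussian PDFs and their derivatives), $u$ is real analytic in $x$ for every $t > 0$ and the usual heat-kernel estimates justify the PDE interpretation and the limit $u(\cdot, t) \to f$ uniformly on compacta as $t \to 0^+$.

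The main workhorse is then a Sturm--Angenent-type principle: if $u$ is a smooth, not identically zero solution of the 1-D heat equation, then $N(t) := \#\{x : u(x,t) = 0\}$ is non-increasing in $t$, and in fact drops strictly at any time when $u(\cdot, t)$ has a zero of order $\ge 2$. The heuristic is that a new zero at time $t_0$ could only appear as a tangential touching of $u(\cdot, t_0)$ with the $x$-axis, but at such a point $\partial_{xx} u$ has a definite sign, so $\partial_t u = \tfrac12 \partial_{xx} u$ does too, contradicting the creation of a new zero. Taking the limit $t \to 0^+$ and using analyticity of $f$ (so that its zeros are isolated and counted correctly) yields $N(\sigma^2) \le \lim_{t \to 0^+} N(t) \le n$, which is the claim.

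The main technical obstacle is making the monotonicity rigorous in the presence of multiple zeros \emph{and} on the unbounded domain $\R$; in particular one must rule out zeros flowing in from $\pm\infty$. I would handle the latter by truncation: pick $R$ large enough that all zeros of $f$ lie in $[-R, R]$, apply Angenent's local argument on this interval, and use the Gaussian decay of $u$ (inherited from decay of $f$ and $\phi_t$) plus a standard tail estimate to show that $u(\pm R, t)$ stays bounded away from zero, so no new zeros can enter through the boundary on the time interval $[0, \sigma^2]$. A conceptually cleaner alternative, which I would present as a backup, is to invoke Schoenberg's variation-diminishing theorem directly: the Gaussian is a P\'olya frequency function of infinite order, so $V(f * \phi_\sigma) \le V(f)$ where $V$ counts sign changes, and for real-analytic $f$ one then upgrades sign-change counts to zero-counts using a local Weierstrass preparation / multiplicity argument to rule out the creation of tangential (even-multiplicity) zeros by the strictly positive analytic kernel.
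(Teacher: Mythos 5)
The paper does not prove this statement at all: it is imported as a black box from the cited reference \citep{HG84}, so there is no internal proof to compare against. Your heat-flow argument is, in substance, the standard proof of this fact (and essentially the route taken in the scale-space literature that the citation points to): identify $f * \phi_{\sigma^2}$ with the time-$\sigma^2$ slice of the heat semigroup and invoke the Sturm--Angenent non-increase of the zero count for one-dimensional parabolic equations. You have correctly flagged the three points where the sketch needs work -- ruling out zeros entering from $\pm\infty$ on the unbounded domain, handling tangential (even-multiplicity) zeros when passing from sign changes to zero counts, and justifying $\lim_{t\to 0^+} N(t)\le n$ via local uniform convergence to the analytic initial datum -- and your proposed fixes (truncation plus decay estimates, Angenent's strict drop at degenerate zeros, and the Schoenberg/P\'olya-frequency-function backup) are the right ones. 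The only caveat worth adding is that the theorem as stated in the paper is slightly too liberal: for an arbitrary analytic $f$ the convolution with a Gaussian need not even converge, so one must assume a sub-Gaussian growth condition; as you note, this is harmless here because the paper only ever applies the result to finite linear combinations of unit-variance Gaussian densities.
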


This allows us to prove:
\begin{theorem}
\label{thm:3zeros}\label{thm:boundedzeros}
Any linear combination $F(x)$ of the probability density functions of $k$ Gaussians with the same variance has at most $k-1$ zeros, provided at least two of the Gaussians have different means. In particular, for any $\mu \neq \nu$, the function $F(x) = D_{\mu} (x) - D_{\nu} (x)$ has at most 3 zeroes.
\end{theorem}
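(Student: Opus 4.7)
My plan is to reduce the zero count of $F$ to a classical bound on zeros of exponential sums, which then follows by induction using Rolle's theorem. This route does not actually require the Hummel--Gidas theorem; the latter is a more powerful tool that will presumably be used elsewhere in the appendix, but for this specific lemma the exponential-polynomial argument is self-contained and short. First I would collect Gaussians sharing a common mean and discard any group whose coefficients sum to zero, reducing to $F(x) = \sum_{i=1}^m a_i \phi_\sigma(x-\mu_i)$ with the $\mu_i$ pairwise distinct, all $a_i \neq 0$, and $m \leq k$. Since Gaussians with distinct means are linearly independent, the hypothesis that at least two of the original Gaussians have different means ensures the reduced $F$ is not identically zero, and it suffices to show $F$ has at most $m-1 \leq k-1$ real zeros.

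Next I would strip off the common Gaussian envelope. Multiplying $F(x)$ by the strictly positive function $\sigma\sqrt{2\pi}\,e^{x^2/(2\sigma^2)}$ preserves the zero set and yields
\[
H(x) \;=\; \sum_{i=1}^m b_i\,e^{\lambda_i x}, \qquad \lambda_i := \mu_i/\sigma^2,\quad b_i := a_i\,e^{-\mu_i^2/(2\sigma^2)} \neq 0,
\]
with the $\lambda_i$ pairwise distinct. The problem thus reduces to showing that any exponential sum with $m$ distinct real exponents and $m$ nonzero coefficients has at most $m-1$ real zeros.

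I would prove this reduced claim by induction on $m$. The base case $m=1$ is trivial since a nonzero single exponential never vanishes. For the inductive step, order the exponents $\lambda_1 < \cdots < \lambda_m$ and define $\tilde H(x) := e^{-\lambda_1 x} H(x) = b_1 + \sum_{i=2}^m b_i\,e^{(\lambda_i - \lambda_1)x}$; since $e^{-\lambda_1 x}>0$, the zero sets of $\tilde H$ and $H$ coincide. Differentiating gives $\tilde H'(x) = \sum_{i=2}^m b_i(\lambda_i - \lambda_1)\,e^{(\lambda_i - \lambda_1)x}$, which is again an exponential sum, now with exactly $m-1$ terms, distinct positive exponents, and nonzero coefficients. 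By the inductive hypothesis $\tilde H'$ has at most $m-2$ zeros, and Rolle's theorem then forces $\tilde H$ to have at most $m-1$ zeros, closing the induction.

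Finally, the ``in particular'' statement is immediate: $F = D_\mu - D_\nu$ is a linear combination of at most four Gaussians with coefficients $\pm 1/2$, and the assumption $\mu \neq \nu$ (read as $G_\mu \neq G_\nu$ as distributions) guarantees $F \not\equiv 0$, so the general bound gives at most $4-1 = 3$ real zeros. I do not anticipate any substantive obstacle along this plan; the one mildly delicate point is ruling out the degenerate case in which all grouped coefficients cancel, which is handled cleanly by the grouping reduction at the very start.
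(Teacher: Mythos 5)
Your proof is correct, and it takes a genuinely different route from the paper's. The paper first establishes the bound when the means are very far apart, via a sign/monotonicity case analysis on consecutive pairs of Gaussians, and then transfers it to the general case by rescaling the line (so the mixture becomes a sum of very narrow Gaussians with well-separated means) and convolving with a wide Gaussian, invoking the Hummel--Gidas result (Theorem~\ref{thm:gauss-conv-reduces-zeros}) to guarantee the convolution creates no new zeros. You instead strip the common Gaussian envelope by multiplying by the positive function $e^{x^2/(2\sigma^2)}$, turning the mixture into an exponential sum $\sum_i b_i e^{\lambda_i x}$ with distinct exponents and nonzero coefficients, and then apply the classical Descartes-type bound for exponential sums via Rolle's theorem and induction. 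Your argument is shorter, self-contained, and fully rigorous, whereas the paper's ``$\delta$ sufficiently large'' step is only sketched; what you give up is only that the Hummel--Gidas machinery, which the paper reuses elsewhere, plays no role here. One small caveat applies equally to both proofs: the stated hypothesis that at least two Gaussians have different means does not by itself exclude the degenerate case in which, after grouping by common mean, every group's coefficients cancel and $F \equiv 0$ (which would have infinitely many zeros); like the paper, you must, and implicitly do, assume the reduced combination is nontrivial, which holds in the intended application $F = D_\mu - D_\nu$ with $G_\mu \neq G_\nu$ as distributions.
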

\begin{proof}
If we have more than $1$ Gaussian with the same mean, we can replace all Gaussians having that mean with an appropriate factor times a single Gaussian with that mean. Thus, we assume without loss of generality that all Gaussians have distinct means. We may also assume without loss of generality that all Gaussians have a nonzero coefficient in the definition of $F$.

Suppose the minimum distance between the means of any of the Gaussians is $\delta$.
We first prove the statement when $\delta$ is sufficiently large compared to everything else.
Consider any pair of Gaussians with consecutive means $\nu, \mu$.
WLOG assume that $\mu > \nu = 0$. Suppose our pair of Gaussians has the same sign in the definition of $F$. In particular they are both strictly positive. For sufficiently large $\delta$, we can make the contribution of the other Gaussians to $F$ an arbitrarily small fraction of the whichever Gaussian in our pair is largest for all points on $[\nu,\mu]$. Thus, for $\delta$ sufficiently large, that there are no zeros on this interval.

Now suppose our pair of Gaussians have different signs in the definition of $F$. Without loss of generality, assume the sign of the Gaussian with mean $\nu$ is positive and the sign of the Gaussian with mean $\mu$ is negative. Then the PDF of the first Gaussian is strictly decreasing on $(\nu,\mu]$ and the PDF of the negation of the second Gaussian is decreasing on $[\nu,\mu)$. Thus, their sum is strictly decreasing on this interval. Similarly to before, by making $\delta$ sufficiently large, the magnitude of the contributions of the other Gaussians to the derivative in this region can be made an arbitrarily small fraction of the magnitude of whichever Gaussian in our pair contributes the most at each point in the interval. Thus, in this case, there is exactly one zero in the intervale $[\mu,\nu]$.

Also, note that there can be no zeros of $F$ outside of the convex hull of their means. This follows by essentially the same argument as the two positive Gaussians case above.

The general case (without assuming $\delta$ sufficiently large) follows by considering sufficiently skinny (nonzero variance) Gaussians with the same means as the Gaussians in the definition of $F$, rescaling the domain so that they are sufficiently far apart, applying this argument to this new function, unscaling the domain (which doesn't change the number of zeros), then convolving the function with an appropriate (very fat) Gaussian to obtain the real $F$, and invoking Theorem~\ref{thm:gauss-conv-reduces-zeros} to say that the number of zeros does not increase from this convolution.
\end{proof}

\subsection{The function $L$}
\label{app:L}
In this section, we derive the form of $L$.
By definition, we have
\begin{align*}
&\sqrt{2 \pi} L(\muhat, \ell, r) = \sqrt{2 \pi} \left(\E_{x \sim G_{\mu^*}} [D(x)] +  \E_{x \sim G_{\mu}} [1 - D (x)] \right) \\ 
&= \sqrt{2 \pi} \left( \int_{I} G_{\mu^*} (x) - G_{\muhat} (x) dx \right) + \sqrt{2 \pi}\ ,
\end{align*}
where $I = [\ell_1, r_1] \cup [\ell_2, r_2]$.
We then have
\begin{align*}
&\sqrt{2 \pi} L(\muhat, \ell, r) \\
&~~~= \sqrt{2 \pi} \left( \sum_{i = 1, 2} \int_{\ell_i}^{r_i} G_{\mu^*} (x) - G_{\muhat} (x) dx  \right) + \sqrt{2 \pi}\\
&~~~= \sum_{i = 1, 2} \sum_{j = 1, 2} \int_{\ell_i}^{r_i} e^{-(x - \mu^*_j)^2 / 2} - e^{-(x - \muhat_j)^2 /2} dx + \sqrt{2 \pi} \; . \numberthis \label{eq:L}
\end{align*}
It is not to hard to see from the Fundamental theorem of calculus that $L$ is indeed a smooth function of all parameters.

\section{Alternative Induced Dynamics}
\label{sec:alternative-dynamics}
Our focus in this paper is on the dynamics induced by, since it arises naturally from the form of the total variation distance ~(\ref{eq:TV}) and follows the canonical form of GAN dynamics (\ref{eq:GAN}).  However, one could consider other equivalent definitions of total variation distance too. And these definitions could, in principle, induce qualitatively different behavior of the first order dynamics.

As mentioned in Section~\ref{sec:model-definition}, an alternative dynamics could be induced by the definition of total variation distance given in~(\ref{eq:TV2}).
The corresponding loss function would be
\begin{equation}
\label{eq:dynamics-abs}
L'(\mu, \ell ,r) = |L(\mu, \ell, r) | = \left| \E_{x \sim G_{\mu^*}} [D(x)] +  \E_{x \sim G_{\mu}} [1 - D (x)] \right| \; ,
\end{equation}
i.e. the same as in~(\ref{eq:dynamics}) but with absolute values on the outside of the expression.
Observe that this loss function does not actually fit the form of the general GAN dynamics presented in (\ref{eq:GAN}).
However, it still constitutes a valid and fairly natural dynamics. 
Thus one could  wonder whether similar behavior to the one we observe for the dynamics we actually study occurs also in this case.

To answer this question, we first observe that by the chain rule, the (sub)-gradient of $L'$ with respect to $\mu, \ell, r$ are given by
\begin{align*}
\nabla_{\mu} L' (\mu, \ell, r) &= \sgn{L(\mu, \ell, r)} \nabla_{\mu} L(\mu, \ell, r) \\
\nabla_{\ell} L' (\mu, \ell, r) &= \sgn{L(\mu, \ell, r)} \nabla_{\ell} L(\mu, \ell, r) \\
\nabla_{r} L' (\mu, \ell, r) &= \sgn{L(\mu, \ell, r)} \nabla_{r} L(\mu, \ell, r) \; ,
\end{align*}
that is, they are the same as for $L$ except modulated by the sign of $L$.

We now show that the {\em optimal} discriminator dynamics is identical to the one that we analyze in the paper \eqref{eq:muhat-update}, and hence still provably converge.
This requires some thought; indeed a priori it is not even clear that the optimal discriminator dynamics are well-defined, since the optimal discriminator is no longer unique.
This is because for any $\mu^*, \mu$, the sets $A_1 = \{x: G_{\mu^*} (x) \geq G_{\mu} (x) \}$ and $A_2 = \{ x: G_{\mu} (x) \geq G_{\mu^*} (x) \}$ both achieve the maxima in~~(\ref{eq:TV2}), since
\begin{equation}
\label{eq:TV-equiv}
\int_{A_1} G_{\mu} (x) - G_{\mu^*} (x) dx = - \int_{A_2} G_{\mu} (x) - G_{\mu^*} (x) dx \; .
\end{equation}

However, we show that the optimal discriminator dynamics are still well-formed.
WLOG assume that $\int_{A_1} G_{\mu} (x) - G_{\mu^*} (x) dx \geq 0$, so that $A_1$ is also the optimal discriminator for the dynamics we consider in the paper.
If we let $\ell^{(i)}, r^{(i)}$ be the left and right endpoints of the intervals in $A_i$ for $i = 1, 2$, we have that the update to $\mu$ induced by $(\ell^{(1)}, r^{(1)})$ is given by
\begin{align*}
\nabla_{\mu} L' (\mu, \ell^{(1)}, r^{(1)}) = \nabla_{\mu} L (\mu, \ell^{(1)}, r^{(1)}) \; ,
\end{align*}
so the update induced by $(\ell^{(1)}, r^{(1)})$ is the same as the one induced by the optimal discriminator dynamics in the paper.
Moreover, the update to $\mu$ induced by $(\ell^{(2)}, r^{(2)})$ is given by
\begin{align*}
\nabla_{\mu} L' (\mu, \ell^{(2)}, r^{(2)}) &= \sgn{L (\mu, \ell^{(2)}, r^{(2)})} \nabla_{\mu} L (\mu, \ell^{(2)}, r^{(2)}) \\
&\stackrel{(a)}{=} - \nabla_{\mu} (- L (\mu, \ell^{(1)}, r^{(1)})) \\
&= \nabla_{\mu} L(\mu, \ell^{(1)}, r^{(1)}) \; ,
\end{align*}
where (a) follows from the assumption that $\int_{A_1} G_{\mu} (x) - G_{\mu^*} (x) dx \geq 0$ and from (\ref{eq:TV-equiv}), so it is also equal to the the one induced by the optimal discriminator dynamics in the paper.
Hence the optimal discriminator dynamics are well-formed and unchanged from the optimal discriminator dynamics described in the paper.

Thus the question is whether the first order approximation of this dynamics and/or the unrolled first order dynamics exhibit the same qualitative behavior too.
To evaluate the effectiveness, we performed for these dynamics experiments analogous to the ones summarized in Figure \ref{fig:heatmap} in the case of the dynamics we actually analyzed. 
The results of these experiments are presented in Figure \ref{fig:heatmap_abs}.
Although the probability of success for these dynamics is higher, they still often do not converge.
We can thus see that a similar dichotomy occurs here as in the context of the dynamics we actually study. In particular, we still observe the discriminator collapse phenomena in these first order dynamics. 

\begin{figure*}[h]
\centering
\includegraphics[width=.9\columnwidth]{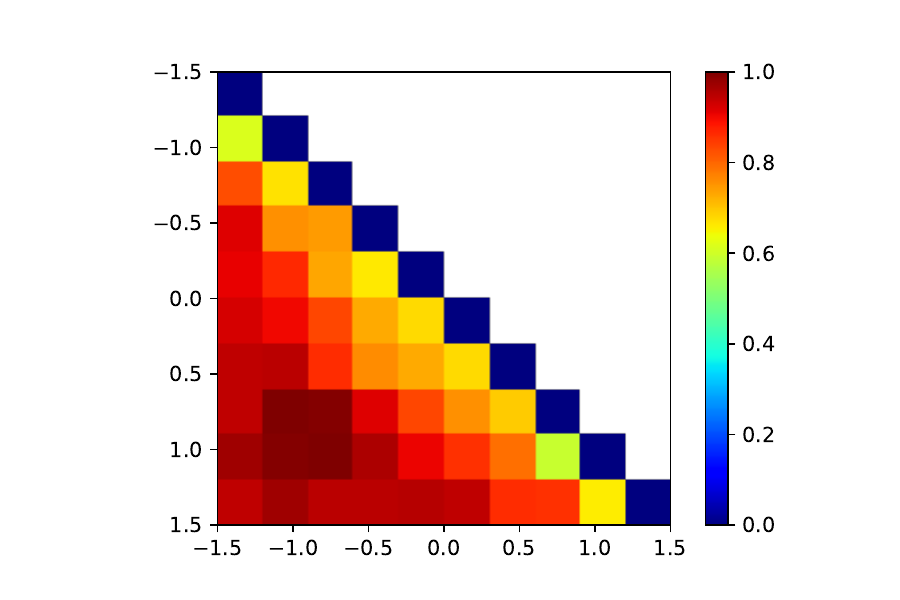} \includegraphics[width=.9\columnwidth]{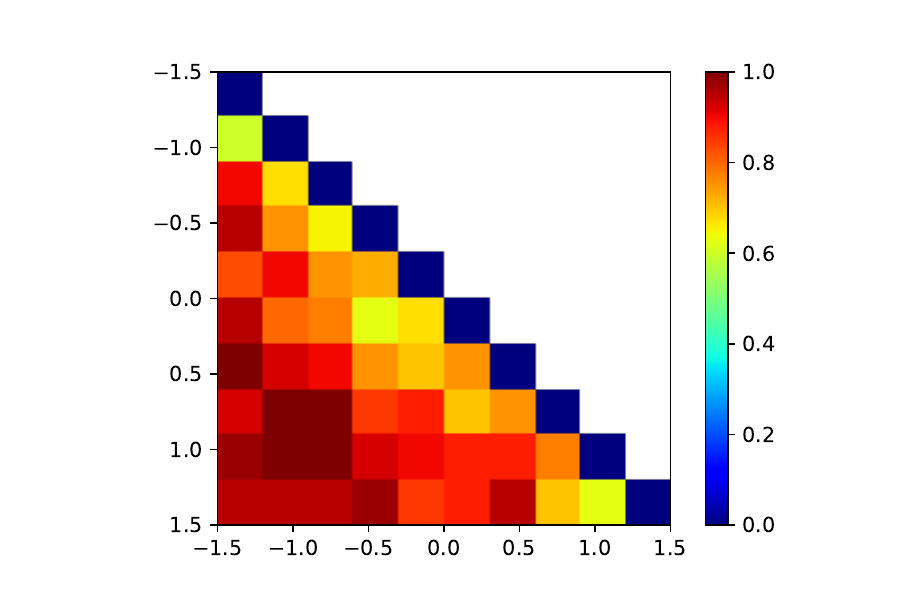} 
\caption{Heatmap of success probability for random discriminator initialization for regular GAN training, unrolled GAN training with dynamics induced by \ref{eq:dynamics-abs}}
\label{fig:heatmap_abs}
\end{figure*}

\subsection{Why does discriminator collape still happen?}
It might be somewhat surprising that even with absolute values discriminator collapse occurs.
Originally the discriminator collapse occurred because if an interval was stuck in a negative region, it always subtracts from the value of the loss function, and so the discriminator is incentivized to make it disappear.
Now, since the value of the loss is always nonnegative, it is not so clear that this still happens.

Despite this, we still observe discriminator collapse with these dynamics.
Here we describe one simple scenario in which discriminator collapse still occurs.
Suppose the discriminator intervals have left and right endpoints $\ell, r$ and $L (\mu, \ell, r) > 0$.
More if it is the case that $\int_{\ell_i}^{r_i} G_{\mu^*} (x) - G_{\mu} (x) dx < 0$ for some $i = 1, 2$. that is, on one of the discriminator intervals the value of the loss is negative, then the discriminator is still incentivized locally to reduce this interval to zero, as doing so increases both $L(\mu, \ell, r)$ and hence $L'(\mu, \ell, r)$.
Symmetrically if $L(\mu, \ell, r) < 0$ and there is a discriminator interval on which the loss is positive, the discriminator is incentivized locally to reduce this interval to zero, since that increases $L' (\mu, \ell, r)$.
This causes the discriminator collapse and subsequently causes the training to fail to converge.

\section{Omitted Proofs from Section \ref{sec:analysis}}
\label{app:proofs-analysis}
This appendix is dedicated to a proof of Theorem \ref{thm:main}.
We start with some remarks on the proof techniques for these main lemmas.
At a high level, Lemmas \ref{lem:gradbound}, \ref{lem:mode-collapse}, \ref{lem:no-diverging} all follow from involved case analyses.
Specifically, we are able to deduce structure about the possible discriminator intervals by reasoning about the structure of the current mean estimate $\muhat$ and the true means.
From there we are able to derive bounds on how these discriminator intervals affect the derivatives and hence the update functions.

To prove Lemma \ref{lem:smoothbound}, we carefully study the evolution of the optimal discriminator as we make small changes to the generator.
The key idea is to show that when the generator means are far from the true means, then the zero crossings of $F(\muhat, x)$ cannot evolve too unpredictably as we change $\muhat$.
We do so by showing that locally, in this setting $F$ can be approximated by a low degree polynomial with large coefficients, via bounding the condition number of a certain Hermite Vandermonde matrix.
This gives us sufficient control over the local behavior of zeros to deduce the desired claim.
By being sufficiently careful with the bounds, we are then able to go from this to the full generality of the lemma.
We defer further details to Appendix \ref{app:proofs-analysis}.
\subsection{Setup}
By inspection on the form of (\ref{eq:L}), we see that the gradient of the function $f_{\mu^*} (\muhat)$ if it is defined must be given by 
\[
\frac{\partial f_{\mu^*}}{\partial \muhat_i} = \frac{1}{\sqrt{2 \pi}}\sum_{i = 1, 2} \left( e^{-(\muhat_i - r_i)^2 / 2} - e^{-(\muhat_i - \ell_i)^2 / 2} \right) \; .
\]
Here $I_i = [\ell_i, r_i]$ are the intervals which achieve the supremum in (\ref{eq:Ak}).
While these intervals may not be unique, it is not hard to show that this value is well-defined, as long as $\muhat \neq \mu^*$, that is, when the optimal discriminator intervals are unique as sets.

Recall $F_{\mu^*} (\muhat, x) =  G_{\mu^*} (x) - G_{\muhat} (x)$.

\subsection{Basic Math Facts}
Before we begin, we require the following facts.

We first need that the Gaussian, and any fixed number of derivatives of the Gaussian, are Lipschitz functions.
\begin{fact}\label{fact:absolute-gaussian-bound}
For any constant $i$, there exists a constant $B$ such that for all $x,\mu \in \R$, $\frac{\mathrm{d}^i}{\mathrm{d}x^i} \normal(x,\mu,\sigma^2=1) \leq B$.
\end{fact}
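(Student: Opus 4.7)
The plan is to compute the $i$-th derivative of a standard Gaussian in closed form and observe that it is an absolutely bounded function of its argument. Concretely, a direct induction on $i$ (or the standard Hermite polynomial identity) yields
\[
\frac{\mathrm{d}^i}{\mathrm{d}x^i}\,\normal(x,\mu,1) \;=\; \frac{(-1)^i}{\sqrt{2\pi}}\,H_i(x-\mu)\,e^{-(x-\mu)^2/2},
\]
where $H_i$ is a fixed polynomial of degree $i$ (the probabilist's Hermite polynomial). Substituting $y = x-\mu$ reduces the claim to bounding $|H_i(y)|\,e^{-y^2/2}$ uniformly over $y \in \R$.

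Next, I would observe that for fixed constant $i$, the function $y \mapsto |H_i(y)|\,e^{-y^2/2}$ is continuous on $\R$ and satisfies $|H_i(y)|\,e^{-y^2/2} \to 0$ as $|y|\to\infty$, since the Gaussian factor dominates any polynomial. Hence the function attains a finite maximum on $\R$; call it $B_i$. Then $B := B_i/\sqrt{2\pi}$ is a constant depending only on $i$, and the bound $\frac{\mathrm{d}^i}{\mathrm{d}x^i}\,\normal(x,\mu,1) \leq B$ holds for all $x,\mu\in\R$ as required.

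The main (minor) subtlety is simply justifying the closed form of the derivative; this is immediate by induction using the identity $\frac{\mathrm{d}}{\mathrm{d}x}\bigl(P(x-\mu) e^{-(x-\mu)^2/2}\bigr) = \bigl(P'(x-\mu) - (x-\mu)P(x-\mu)\bigr)e^{-(x-\mu)^2/2}$, which shows that differentiation preserves the form ``polynomial times Gaussian'' and increments the polynomial degree by exactly one. No genuine obstacle is present; the statement is essentially just the observation that scaled Hermite functions are bounded, and the proof is a few lines.
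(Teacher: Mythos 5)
Your proof is correct and follows essentially the same route as the paper's: both reduce to the translation-invariant variable $y=x-\mu$ and conclude boundedness from continuity plus decay at infinity. The only difference is that you make explicit, via the Hermite-polynomial closed form, the step the paper merely asserts (that each derivative of the Gaussian PDF is a bounded function vanishing at $\pm\infty$), which is a harmless and slightly more rigorous elaboration.
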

\begin{proof}
Note that every derivative of the Gaussian PDF (including the $0th$) is a bounded function. Furthermore, all these derivatives eventually tend to $0$ whenever the input goes towards $\infty$ or $-\infty$. Thus, any particular derivative is bounded by a constant for all $\R$. Furthermore, shifting the mean of the Gaussian does not change the set of values the derivatives of its derivative takes (only their locations).
\end{proof}

We also need the following bound on the TV distance between two Gaussians, which is folklore, and is easily proven via Pinsker's inequality.
\begin{fact}[folklore]\label{fact:mean-sep-to-tv}
If two univariate Gaussians with unit variance have means within distance at most $\Delta$ then their TV distance is at most $O(1) \cdot \Delta$.
\end{fact}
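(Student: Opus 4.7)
The plan is to invoke Pinsker's inequality, exactly as hinted in the statement. Pinsker's inequality says that for any two probability distributions $P, Q$ on a common space, $d_{\TV}(P,Q) \leq \sqrt{\tfrac{1}{2}\, \KL(P\,\|\,Q)}$. So it suffices to bound the KL divergence between the two unit-variance Gaussians in question.

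For unit-variance univariate Gaussians $P = \normal(\mu_1, 1)$ and $Q = \normal(\mu_2, 1)$, a textbook calculation (expanding $\log(P/Q)$ and taking the expectation under $P$, where the quadratic terms cancel and only the linear-in-mean term survives) gives the closed form $\KL(P \,\|\, Q) = \tfrac{1}{2}(\mu_1 - \mu_2)^2$. Plugging the hypothesis $|\mu_1 - \mu_2| \leq \Delta$ into this closed form yields $\KL(P\,\|\,Q) \leq \Delta^2/2$, and then Pinsker's inequality yields $d_{\TV}(P,Q) \leq \Delta/2$, which is $O(1)\cdot \Delta$ as claimed.

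There is nothing genuinely hard here; the only minor ``obstacle'' is picking which standard fact to cite. An equally short alternative, if one prefers to avoid invoking Pinsker, is to compute $d_{\TV}$ directly: since the two densities cross at the midpoint $(\mu_1+\mu_2)/2$, one obtains the explicit formula $d_{\TV}(P,Q) = 2\Phi(\Delta/2) - 1$ where $\Phi$ is the standard normal CDF, and then uses $\Phi(x) - \tfrac{1}{2} \leq x/\sqrt{2\pi}$ for $x \geq 0$ to conclude $d_{\TV}(P,Q) \leq \Delta/\sqrt{2\pi}$. Either route takes only a few lines and gives a linear-in-$\Delta$ bound with an absolute constant, which is precisely what the fact asserts.
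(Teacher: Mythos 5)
Your proof is correct and follows exactly the route the paper indicates: the paper states this fact as folklore "easily proven via Pinsker's inequality," and your computation of $\KL(\normal(\mu_1,1)\,\|\,\normal(\mu_2,1)) = (\mu_1-\mu_2)^2/2$ followed by Pinsker is precisely that argument, carried out in full. The alternative direct computation you mention is also valid but unnecessary.
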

This immediately implies the following, which states that $f_{\mu^*}$ is Lipschitz.
\begin{corollary}
There is some absolute constant $C$ so that for any $\mu, \nu$, we have $|f_{\mu^*} (\mu) - f_{\mu^*} (\nu)| \leq C \| \mu - \nu \|_2$.
\end{corollary}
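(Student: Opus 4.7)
The plan is to reduce the claim to a bound on the TV distance between two generators $G_\mu$ and $G_\nu$, and then reduce that in turn to the TV distance between individual Gaussian components, which is controlled by Fact~\ref{fact:mean-sep-to-tv}.

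First I would apply the reverse triangle inequality for total variation distance. Since $f_{\mu^*}(\mu) = d_{\TV}(G_\mu, G_{\mu^*})$ and $d_{\TV}$ is a metric,
\[
|f_{\mu^*}(\mu) - f_{\mu^*}(\nu)| = \bigl|d_{\TV}(G_\mu, G_{\mu^*}) - d_{\TV}(G_\nu, G_{\mu^*})\bigr| \leq d_{\TV}(G_\mu, G_\nu).
\]
This step removes $\mu^*$ from the picture entirely and reduces the problem to bounding $d_{\TV}(G_\mu, G_\nu)$ in terms of $\|\mu - \nu\|_2$.

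Next I would use the convexity of total variation distance in its arguments: for any mixtures $\frac{1}{2} P_1 + \frac{1}{2} P_2$ and $\frac{1}{2} Q_1 + \frac{1}{2} Q_2$, we have $d_{\TV}(\frac{1}{2} P_1 + \frac{1}{2} P_2, \frac{1}{2} Q_1 + \frac{1}{2} Q_2) \leq \frac{1}{2} d_{\TV}(P_1, Q_1) + \frac{1}{2} d_{\TV}(P_2, Q_2)$, which follows immediately from the $L^1$ definition~(\ref{eq:TV}) and the triangle inequality under the integral. Applying this with $P_i = \normal(\mu_i, 1)$ and $Q_i = \normal(\nu_i, 1)$ gives
\[
d_{\TV}(G_\mu, G_\nu) \leq \tfrac{1}{2} d_{\TV}(\normal(\mu_1,1), \normal(\nu_1, 1)) + \tfrac{1}{2} d_{\TV}(\normal(\mu_2,1), \normal(\nu_2, 1)).
\]

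Finally, by Fact~\ref{fact:mean-sep-to-tv}, each of these single-Gaussian TV distances is at most $O(1) \cdot |\mu_i - \nu_i|$, so the right-hand side is bounded by $O(1) \cdot (|\mu_1 - \nu_1| + |\mu_2 - \nu_2|) \leq O(1) \cdot \sqrt{2} \cdot \|\mu - \nu\|_2$ by Cauchy--Schwarz. Absorbing constants gives the claimed Lipschitz bound with some absolute constant $C$. There is no real obstacle here; the only subtlety is invoking convexity of TV on mixtures, but this is standard and falls directly out of the $L^1$ definition, so the entire argument is a short chain of inequalities.
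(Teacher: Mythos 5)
Your proof is correct and fills in exactly the chain of inequalities the paper leaves implicit when it says the corollary "immediately" follows from Fact~\ref{fact:mean-sep-to-tv}: reverse triangle inequality for the TV metric, convexity of TV over the mixture components, and the single-Gaussian bound. No gaps; this is the intended argument.
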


We also need the following basic analysis fact:
\begin{fact}[folklore]
Suppose $g: \R^d \to \R$ is $B$-Lipschitz for some $B$.
Then $g$ is differentiable almost everywhere.
\end{fact}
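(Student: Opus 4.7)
The plan is to prove Rademacher's theorem in the form stated. The high-level strategy is to first control directional derivatives along lines via the one-dimensional case, then identify them with the gradient of $g$, and finally use the Lipschitz bound to pass from a countable dense set of directions to all directions.

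First I would handle the one-dimensional case ($d=1$): a Lipschitz function on $\R$ is absolutely continuous, so by the Lebesgue differentiation theorem (applied to its a.e.-defined weak derivative, which is bounded by $B$), it is differentiable a.e. Next, for each fixed unit vector $v \in S^{d-1}$, I would consider the function $t \mapsto g(x + tv)$ for each $x$ in a hyperplane $v^{\perp}$; this function is $B$-Lipschitz, so it is differentiable at almost every $t$. A Fubini argument then shows that the directional derivative $D_v g(x)$ exists for Lebesgue-a.e.\ $x \in \R^d$. Applying this to each coordinate direction $e_i$ gives that the partial derivatives $\partial_i g$ exist a.e., and hence the candidate gradient $\nabla g(x) = (\partial_1 g(x), \ldots, \partial_d g(x))$ is defined a.e.\ and bounded in norm by $\sqrt{d}\,B$.

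The next step is the critical identification $D_v g(x) = \langle \nabla g(x), v \rangle$ for a.e.\ $x$, for each fixed $v$. I would establish this distributionally: for any smooth, compactly supported test function $\varphi$, using the chain rule on difference quotients and the dominated convergence theorem (with Lipschitz constant $B$ giving the dominating function), one gets
\[
\int D_v g(x)\, \varphi(x)\, dx = -\int g(x)\, D_v \varphi(x)\, dx = -\sum_i v_i \int g(x)\, \partial_i \varphi(x)\, dx = \int \langle \nabla g(x), v \rangle \varphi(x)\, dx.
\]
Since this holds for every such $\varphi$, the two measurable functions agree a.e. Let $N_v \subset \R^d$ be the exceptional null set.

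Finally, I would fix a countable dense set $\{v_k\}_{k \in \N} \subset S^{d-1}$ and let $N = \bigcup_k N_{v_k}$, still a null set. For any $x \notin N$, I claim $g$ is (fully) differentiable at $x$, with derivative $\nabla g(x)$. To see this, given an arbitrary direction $v$, pick $v_k$ close to $v$ and write
\[
\frac{g(x + tv) - g(x) - t\langle \nabla g(x), v \rangle}{t} = \frac{g(x + tv_k) - g(x) - t\langle \nabla g(x), v_k\rangle}{t} + R(t, v, v_k),
\]
where $|R(t, v, v_k)| \leq (B + \|\nabla g(x)\|)\|v - v_k\|$ by the Lipschitz property of $g$ and Cauchy--Schwarz. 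Taking $t \to 0$ kills the first term since $x \notin N_{v_k}$, and then letting $v_k \to v$ kills the remainder. A standard $\varepsilon/2$ argument (uniform over small $t$ because the $R$ bound is independent of $t$) upgrades the pointwise directional differentiability to full Fréchet differentiability.

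The main obstacle is the passage from countably many directions to all directions uniformly in the last step; everything else is either a reduction to the well-known one-dimensional theorem or a routine distributional calculation. The Lipschitz hypothesis is exactly what makes the remainder $R(t, v, v_k)$ uniformly small in $t$, which is why the statement is tight.
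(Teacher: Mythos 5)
The paper does not prove this fact at all: it is stated as folklore (it is Rademacher's theorem) and used as a black box, so there is no in-paper argument to compare against. Your sketch is the standard proof of Rademacher's theorem (reduction to the one-dimensional Lebesgue theorem via Fubini, identification of directional derivatives with $\langle \nabla g, v\rangle$ by a weak/distributional integration-by-parts, and then the upgrade from countably many directions to Fr\'echet differentiability), and it is correct. The one genuinely delicate point, the final uniformity over directions, is handled properly: the remainder bound $|R(t,v,v_k)| \leq (B + \|\nabla g(x)\|)\,\|v - v_k\|$ is independent of $t$, so one can pass to a finite $\varepsilon$-net of the sphere inside the countable dense set, take the minimum of the finitely many thresholds on $|t|$, and conclude. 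Minor routine points you elide (Borel measurability of the set where $D_v g$ exists, needed for Fubini, and the dominated-convergence justification of the difference-quotient integration by parts) are standard and do not constitute gaps.
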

This implies that $f_{\mu^*}$ is indeed differentiable except on a set of measure zero.
As mentioned previously, we will always assume that we never land within this set during our analysis.

\subsection{Proof of Theorem \ref{thm:main} given Lemmata}

Before we prove the various lemmata described in the body, we show how Theorem \ref{thm:main} follows from them.
\begin{proof}[Proof of Theorem \ref{thm:main}]
Set $\delta'$ be a sufficiently small constant multiple of $\delta$. Provided we make the nonzero constant factor on the step size sufficiently small (compared to $\delta'/\delta$), and the exponent on $\delta$ in the magnitude step size at least one, the magnitude of our step size will be at most $\delta'$. Thus, in any step where $\muhat \in \Uni(\delta')$, we end the step outside of this set but still in $\Uni(2\delta')$. By Lemma~\ref{fact:mean-sep-to-tv}, for a sufficiently small choice of constant in the definition of $\delta'$, the TV-distance at the end of such a step will be at most $\delta$.

Contrapositively, in any step where the TV-distance at the start is more than $\delta$, we will have at the start that $\muhat \not \in \Uni(\delta')$. Then, it suffices to prove that the step decreases the total variation distance additively by at least $1/\poly(C,e^{C^2},1/\delta)$ in this case. For appropriate choices of constants in expression for the step size (sufficiently small multiplicative and sufficiently large in the exponent), this is immediately implied by Lemma~\ref{lem:smoothbound} and Lemma~\ref{lem:progress} provided that $\mu^*,\muhat,\muhat' \in B(2C)$ and $|\muhat_1 - \muhat_2| \geq \delta$ at the beginning of each step. The condition that we always are within $B(2C)$ at the start of each step is proven in Lemma~\ref{lem:no-diverging} and the condition that the means are separated (ie., that we don't have mode collapse) is proven in Lemma~\ref{lem:mode-collapse}.
\end{proof}

It is interesting that a critical component of the above proof involves proving explicitly that mode collapse does not occur. This suggests the possibility that understanding mode collapse may be helpful in understanding convergence of Generative Adversarial Models and Networks.

\subsection{Proof of Lemma \ref{lem:gradbound}}

In this section we prove Lemma \ref{lem:gradbound}.
We first require the following fact:
\begin{fact}[\citep{Markov92}]
\label{fact:poly-lb}
Let $p(x) = \sum_{i = 0}^d c_j x^j$ be a degree $d$ polynomial so that $|p(x)| \leq 1$ for all $x \in [-1, 1]$.
Then $\max_{0 \leq j \leq d} |c_j| \leq (\sqrt{2} + 1)^d$.
More generally, if $|p(x)| \leq \alpha$ for all $x \in [-\rho, \rho]$, then $\max_{0 \leq j \leq d} |c_j \rho^j| \leq O(\alpha)$.
\end{fact}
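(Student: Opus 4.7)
The approach is the classical Chebyshev-basis method. First I would reduce to the case $\rho = \alpha = 1$ by substituting $q(y) = p(\rho y) / \alpha$: then $|q(y)| \leq 1$ on $[-1, 1]$, and the coefficient of $y^j$ in $q$ equals $c_j \rho^j / \alpha$, so a bound on the coefficients of $q$ immediately implies the scaled statement. Thus it suffices to prove the first, normalized assertion.

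In the normalized case, I would expand $p$ in Chebyshev polynomials: $p(x) = \sum_{k=0}^d a_k T_k(x)$. Using orthogonality of the $T_k$ with respect to the weight $(1 - x^2)^{-1/2}$, one has the explicit formula $a_k = (2 / \pi) \int_{-1}^1 p(x) T_k(x) (1 - x^2)^{-1/2}\, dx$ (with the standard $1/\pi$ adjustment at $k = 0$). Since $|T_k|, |p| \leq 1$ on $[-1, 1]$ and $\int_{-1}^1 (1-x^2)^{-1/2} dx = \pi$, we get $|a_k| \leq 2$ for every $k$.

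The core step is to bound the monomial coefficients of $T_k$ themselves. Writing $T_k(x) = \sum_j t_{k, j} x^j$, the three-term recurrence $T_{k+1} = 2 x T_k - T_{k-1}$ gives $t_{k+1, j} = 2 t_{k, j-1} - t_{k-1, j}$, so $s_k := \sum_j |t_{k, j}|$ satisfies $s_{k+1} \leq 2 s_k + s_{k-1}$ with $s_0 = s_1 = 1$. The characteristic polynomial $\lambda^2 - 2 \lambda - 1$ has largest root $1 + \sqrt{2}$, so $s_k \leq (1 + \sqrt{2})^k$, and in particular $|t_{k, j}| \leq (1 + \sqrt{2})^k$ for all $j$.

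Combining the two estimates gives $|c_j| \leq \sum_{k=0}^d |a_k| \cdot |t_{k, j}| \leq 2 \sum_{k=0}^d (1 + \sqrt{2})^k = O\bigl((1 + \sqrt{2})^d\bigr)$, which is the stated bound (the hidden constant from summing the geometric series can be absorbed into a slightly larger base, or simply tracked through the Markov bound). The only non-routine ingredient is the recurrence estimate $s_k \leq (1 + \sqrt{2})^k$; the rest is a change of variables and an orthogonality computation, so the main (minor) obstacle lies in solving the linear recurrence cleanly so that the base $1 + \sqrt{2}$ appears with no extra polynomial-in-$d$ factors.
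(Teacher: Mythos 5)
Your argument is correct, and it supplies a proof where the paper gives none: Fact~\ref{fact:poly-lb} is stated with only a citation to Markov's 1892 work, so there is no in-paper proof to compare against. The Chebyshev-basis route you take is the standard one. The reduction $q(y)=p(\rho y)/\alpha$ is exactly right, the orthogonality computation gives $|a_k|\leq 2$ as claimed, and the recurrence $s_{k+1}\leq 2s_k+s_{k-1}$ with $s_0=s_1=1$ does yield $s_k\leq\tfrac{1}{2}\bigl((1+\sqrt{2})^k+(\sqrt{2}-1)^k\bigr)\leq(1+\sqrt{2})^k$ with no polynomial-in-$d$ loss, since both characteristic roots are handled and the smaller one has modulus below $1$. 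The only discrepancy is the constant: summing the geometric series gives $|c_j|\leq 2\sum_{k=0}^{d}(1+\sqrt{2})^k=O(1)\cdot(1+\sqrt{2})^d$ rather than the sharp bound $(\sqrt{2}+1)^d$ stated in the Fact (the sharp version requires the finer Chebyshev comparison theorem, namely that $|c_j|$ is dominated by the corresponding coefficient of $T_d$ or $T_{d-1}$). This slack is immaterial here: the Fact is invoked in the paper only through its contrapositive in the proof of Lemma~\ref{lem:area-generic}, in the $O(\alpha)$ form with $d$ a constant, so your weaker constant suffices for every downstream use.
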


We also have the following, elementary lemma:
\begin{lemma}
\label{lem:tail}
Suppose $\muhat_2 > \mu^*_i$ for all $i$.
Then there is some $x > \muhat_2$ so that $F_{\mu^*}(\muhat, x) < 0$.
\end{lemma}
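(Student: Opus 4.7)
The plan is to prove the lemma by a direct tail estimate on the Gaussian mixtures, taking $x$ to be sufficiently large. Recall that
\[
F_{\mu^*}(\muhat, x) = G_{\mu^*}(x) - G_{\muhat}(x) = \tfrac{1}{2\sqrt{2\pi}}\sum_{i=1,2} \bigl(e^{-(x-\mu^*_i)^2/2} - e^{-(x-\muhat_i)^2/2}\bigr).
\]
Under the standing convention $\muhat_1 \leq \muhat_2$, the hypothesis $\muhat_2 > \mu^*_i$ for $i=1,2$ implies that $\muhat_2$ is strictly larger than every other mean appearing in $F_{\mu^*}(\muhat, \cdot)$. So intuitively, for $x\to\infty$ the rightmost Gaussian (centered at $\muhat_2$) dominates every other tail, pulling $G_{\muhat}(x)$ above $G_{\mu^*}(x)$.

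To make this precise, I would first lower-bound $G_{\muhat}(x)$ by keeping only the $\muhat_2$ component: $G_{\muhat}(x) \geq \tfrac{1}{2\sqrt{2\pi}}\, e^{-(x-\muhat_2)^2/2}$. Next, let $\mu^*_{\max} = \max_i \mu^*_i < \muhat_2$, and upper-bound the true density by $G_{\mu^*}(x) \leq \tfrac{1}{\sqrt{2\pi}}\, e^{-(x-\mu^*_{\max})^2/2}$ using that $(x-\mu^*_i)^2 \geq (x-\mu^*_{\max})^2$ for every $x \geq \mu^*_{\max}$. Then the ratio is bounded by
\[
\frac{G_{\mu^*}(x)}{G_{\muhat}(x)} \leq 2\, \exp\!\Bigl(\tfrac{1}{2}\bigl[(x-\muhat_2)^2 - (x-\mu^*_{\max})^2\bigr]\Bigr) = 2\,\exp\!\bigl((\mu^*_{\max}-\muhat_2)(x - \tfrac{\muhat_2+\mu^*_{\max}}{2})\bigr).
\]
Since $\mu^*_{\max}-\muhat_2 < 0$ by hypothesis, the exponent tends to $-\infty$ as $x\to\infty$, so the ratio tends to $0$. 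In particular, choosing $x > \muhat_2$ large enough makes the ratio strictly less than $1$, which gives $F_{\mu^*}(\muhat, x) = G_{\mu^*}(x) - G_{\muhat}(x) < 0$, as required.

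There is essentially no hard step: the lemma is a qualitative tail fact and the only thing to check is the sign of the exponent, which is immediate from $\muhat_2 > \mu^*_{\max}$. The one point to be slightly careful with is that $\muhat_1$ need not be less than $\mu^*_{\max}$ — but this does not matter, since we only use $\muhat_2$ in the lower bound on $G_{\muhat}(x)$, and any additional mass from the $\muhat_1$ component can only help us.
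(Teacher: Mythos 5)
Your proof is correct. The paper states this lemma without proof (calling it ``elementary''), and your argument---lower-bounding $G_{\muhat}$ by its rightmost component, upper-bounding $G_{\mu^*}$ by its rightmost tail, and noting that the resulting ratio decays like $e^{(\mu^*_{\max}-\muhat_2)\,x}$ as $x\to\infty$---is exactly the kind of dominant-tail comparison the authors have in mind, with all signs and bounds checked correctly.
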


We are now ready to prove Lemma \ref{lem:gradbound}
\begin{proof}[Proof of Lemma \ref{lem:gradbound}]
We proceed by case analysis on the arrangement of the $\muhat$ and $\mu^*$.
\begin{description}
\item[Case 1: $\mu^*_1 < \muhat_1$ and $\mu^*_2 < \muhat_2$] In this case we have $F_{\mu^*}(\muhat, x) \leq 0$ for all $x \geq \muhat_2$.
Hence the optimal discriminators are both to the left of $\muhat_2$.
Moreover, by a symmetric logic we have $F_{\mu^*}(\muhat, x) \geq 0$ for all $x \leq \mu^*_1$, so the optimal discriminator has an interval of the form $I_1 = [-\infty, r_1]$ and possibly $I_2 = [l_2, r_2]$ where $r_1 < l_2 < r_2 < \muhat_2$.
Then it is easy to see that $\frac{\partial f}{\partial \muhat_2} (\muhat_2) \geq \frac{1}{\sqrt{2 \pi}}e^{-(\muhat_2 - r_2)^2 / 2} \geq \frac{1}{\sqrt{2 \pi}} e^{-2C^2}$.
\item[Case 2: $\muhat_1 < \mu^*_1$ and $\muhat_2 < \mu^*_2$] This case is symmetric to Case (1).
\item[Case 3: $\muhat_1 < \mu^*_1 < \mu^*_2 < \muhat_2$] By Lemma \ref{lem:tail}, we know that $F_{\mu^*}(\muhat, x) < 0$ for some $x \geq \muhat_2$, and similarly $F_{\mu^*}(\muhat, x) < 0$ for some $x \leq \muhat_1$.
Since clearly $F(\mu^*) (\muhat, x) > 0$ for $x \in [\mu^*_1, \mu^*_2]$, by Theorem \ref{thm:3zeros} and continuity, the optimal discriminator has one interval. Denote it by $I = [\ell, r]$, so that we have $\ell \leq \mu^*_1$ and $r \geq \mu^*_2$.
Suppose $\ell \geq \muhat_1$.
Then 
\begin{align*}
&\frac{\partial f}{\partial \muhat_1} (\muhat_1) = \frac{1}{\sqrt{2 \pi}} \left( e^{-(\muhat_1 - \ell)^2 / 2} - e^{-(\muhat_1 - r)^2 / 2} \right) \\
&= \frac{1}{\sqrt{2 \pi}}  e^{- (\muhat_1 - \ell)^2 / 2}\left(1 - e^{-(\muhat_1 - \ell) (r - \ell)} e^{-(r - \ell)^2 / 2} \right) \\
&\geq \frac{1}{\sqrt{2 \pi}} e^{-2 C^2} \left( 1 - e^{-\delta^2 / 2} \right) \; .
\end{align*}
We get the symmetric bound on $\frac{\partial f_{\mu^*}}{\partial \muhat_2} (\muhat_2)$ if $r \leq \muhat_2$.
The final case is if $\ell < \muhat_1 < \muhat_2 < r$.
Consider the auxiliary function
\[
H(\mu) = e^{-(\ell - \mu)^2 / 2} - e^{-(r - \mu)^2 / 2} \; .
\]
On the domain $[\ell, r]$, this function is monotone decreasing.
Moreover, for any $\mu \in [\ell, r]$, we have
\begin{align*}
H'(\mu) &= (\ell - \mu) e^{- (\ell - \mu)^2 / 2} - (r - \mu) e^{-(r - \mu)^2 / 2} \\
&\leq - \frac{r - \ell}{2} e^{-(r- \ell)^2 / 8} \\
&\leq - \frac{\gamma}{2} e^{-\gamma^2 / 8} \; .
\end{align*}
In particular, this implies that $H(\muhat_1) < H(\muhat_2) - \gamma^2 e^{-\gamma^2 / 8} / 2$, so at least one of $H(\muhat_2)$ or $H(\muhat_1)$ must be $\gamma^2 e^{-\gamma^2 / 8} / 4$ in absolute value.
Since $\frac{\partial f_{\mu^*}}{\partial \muhat_i} (\muhat_i) = H(\muhat_i)$, this completes the proof in this case.
\item[Case 4: $\mu^*_1 < \muhat_1 < \muhat_2 < \mu^*_2$] By a symmetric argument to Case 3, we know that the optimal discriminator intervals are of the form $(-\infty, r]$ and $[\ell, \infty)$ for some $r < \muhat_1 < \muhat_2 < \ell$.
The form of the derivative is then exactly the same as in the last sub-case in Case 3 with signs reversed, so the same bound holds here.
\end{description}
\end{proof}

\subsection{Proof of Lemma \ref{lem:smoothbound}}
We now seek to prove Lemma \ref{lem:smoothbound}. 
Before we do so, we need to get lower bounds on derivatives of finite sums of Gaussians with the same variance.
In particular, we first show:
\begin{lemma}
\label{lem:vandermonde}
Fix $\gamma \geq \delta > 0$ and $C \geq 1$. 
Suppose we have $\mu^*, \muhat \in B(C)$,  $\mu^*, \muhat \in \Sep (\gamma)$, with $\muhat \not\in \Rect(\delta)$, where all these vectors have constant length $k$.
Then, for any $x \in [-C, C]$, we have that $|\frac{d^i}{dx^i} F_{\mu^*} (\muhat, x)| \geq \Omega(1) \cdot (\delta/C)^{O(1)} e^{-C^2 / 2}$ for some $i = 0, \ldots, 2k-1$.
\end{lemma}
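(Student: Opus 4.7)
The plan is to view the derivative vector $(\partial^i F/\partial x^i)_{i=0}^{2k-1}$ at a fixed $x\in[-C,C]$ as the image under a Hermite--Vandermonde matrix of a vector $v$ whose entries are, up to signs, the Gaussian values $e^{-(x-\nu_j)^2/2}$ at the $2k$ means $\nu_1,\dots,\nu_{2k}$ coming from $\mu^*$ and $\muhat$. Since the hypotheses force these $2k$ nodes to be pairwise separated by at least $\delta$ and contained in $[-C,C]$, we should be able to lower bound $\|v\|_\infty$ by $e^{-O(C^2)}$ and upper bound the infinity-norm of the Hermite--Vandermonde inverse by $\mathrm{poly}(C/\delta)$. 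Taken together these imply that some derivative of order at most $2k-1$ has magnitude at least $\Omega(1)\cdot(\delta/C)^{O(1)}\cdot e^{-O(C^2)}$, as claimed.

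More concretely, I would first enumerate the coordinates of $\mu^*$ and $\muhat$ as $\nu_1,\dots,\nu_{2k}$, attaching a sign $\sigma_j=+1$ when $\nu_j$ comes from $\mu^*$ and $\sigma_j=-1$ when it comes from $\muhat$, so that $F_{\mu^*}(\muhat,y)=\tfrac{1}{k\sqrt{2\pi}}\sum_j \sigma_j e^{-(y-\nu_j)^2/2}$. The standard identity $\tfrac{d^i}{dy^i}e^{-(y-\nu)^2/2}=(-1)^i H_i(y-\nu)e^{-(y-\nu)^2/2}$, with $H_i$ the probabilist's Hermite polynomial, then yields, writing $u_j:=x-\nu_j$ and $v_j:=\sigma_j e^{-u_j^2/2}$,
\[
\frac{\partial^i F_{\mu^*}}{\partial x^i}(\muhat,x) \;=\; \frac{(-1)^i}{k\sqrt{2\pi}}\sum_{j=1}^{2k} H_i(u_j)\,v_j,
\]
so that the full vector of derivatives of orders $0,\dots,2k-1$ equals $(k\sqrt{2\pi})^{-1} \cdot D \cdot M v$ where $D$ is a diagonal sign matrix and $M_{ij}=H_i(u_j)$ is a $2k\times 2k$ Hermite--Vandermonde matrix.

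The three hypotheses $\mu^*\in\Sep(\gamma)$, $\muhat\in\Sep(\gamma)$, and $\muhat\notin\Rect(\delta)$ together force the $\nu_j$, and hence the $u_j$, to be pairwise separated by at least $\delta$, while $|u_j|\leq 2C$ since $x,\nu_j\in[-C,C]$. Consequently $|v_j|=e^{-u_j^2/2}\geq e^{-2C^2}$, so $\|v\|_\infty$ is bounded below by $e^{-O(C^2)}$. Because $\{H_0,\dots,H_{2k-1}\}$ is a basis for polynomials of degree less than $2k$, we can factor $M=H\cdot V$, where $V_{ij}=u_j^i$ is the standard Vandermonde and $H$ is a $2k\times 2k$ change-of-basis matrix depending only on the constant $k$; hence it suffices to bound $\|V^{-1}\|_\infty$. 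For this I would invoke Fact~\ref{fact:poly-lb} on the polynomial $p(u)=\sum_i c_i u^i$ solving $Vc=b$ (alternatively, the explicit Lagrange formula $V^{-1}_{i,j}=(\text{symm.\ poly.\ in }u_\ell)/\prod_{\ell\neq j}(u_j-u_\ell)$), concluding that $\|V^{-1}\|_\infty\leq (C/\delta)^{O(1)}$, with the exponent depending only on $k$.

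Combining these pieces yields the contradiction that proves the lemma: if every derivative of order at most $2k-1$ had magnitude at most $B$, then $\|Mv\|_\infty\leq k\sqrt{2\pi}\,B$, hence $\|v\|_\infty \leq (C/\delta)^{O(1)}\cdot B$, which together with the bound $\|v\|_\infty\geq e^{-2C^2}$ forces $B\geq \Omega(1)\cdot(\delta/C)^{O(1)}\cdot e^{-O(C^2)}$, matching the stated bound. The main obstacle in this plan is the third step: obtaining a clean $\mathrm{poly}(C/\delta)$ bound on the infinity-norm of the Hermite--Vandermonde inverse that is uniform in the placement of the $u_j$ within $[-2C,2C]$. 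This bound is classical for well-separated nodes in a bounded interval, but one must track how the implicit constants depend on $k$; since the lemma fixes $k$ to be constant, all such dependences safely absorb into the $O(1)$ exponents.
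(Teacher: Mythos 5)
Your proposal is correct and follows essentially the same route as the paper: both express the derivative vector as a Hermite--Vandermonde matrix applied to the vector of (signed) Gaussian values, lower bound that vector by $e^{-O(C^2)}$, and control the matrix inverse via Lagrange interpolation at nodes that are pairwise $\delta$-separated and $O(C)$-bounded. The only difference is the tool used for the inverse bound --- the paper invokes Gautschi's lower bound on the smallest singular value of an orthogonal-polynomial Vandermonde matrix, while you pass to the standard Vandermonde and use the explicit Lagrange inverse formula --- and both reduce to the same coefficient estimate $(C/\delta)^{O(1)}$.
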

\begin{proof}
Observe that the value of the $i$th derivative of $F_{\mu^*} (\muhat, x)$  for any $x$ is given by
\[
\frac{d^i}{dx^i} F_{\mu^*} (\muhat, x) = \frac{1}{\sqrt{2 \pi}} \sum_{j = 1}^{2k} w_j (-1)^i H_i(z_j) e^{- z_j^2 / 2} \; ,
\]
where $w_j \in \{-1/k, 1/k\}$, the $z_j$ is either $x-\mu^*_j$ or $x-\muhat_j$, and $H_i(z)$ is the $i$th (probabilist's) Hermite polynomial.
Note that the $(-1)^i H_i$ are orthogonal with respect to the Gaussian measure over $\R$, and are orthonormal after some finite scaling that depends only on $i$ and is therefore constant.
Hence, if we form the matrix $M_{ij} = (-1)^i H_i (x - z_j)$, if we define $u_i = \frac{d^i}{dx^i} F_{\mu^*} (\muhat, x)$ for $i = 0, \ldots, 2k-1$, we have that $M v = u$, where $v_j = \frac{1}{\sqrt{2 \pi}}w_j e^{-(x - z_j)^2 / 2}$.
By assumption, we have $\| v \|_2 \geq \Omega (\sqrt{k} \cdot e^{-C^2 / 2}) = \Omega (e^{-C^2 / 2})$.
Thus, to show that some $u_i$ cannot be too small, it suffices to show a lower bound on the smallest singular value of $M$.
To do so, we leverage the following fact, implicit in the arguments of \citep{Gautschi90}:
\begin{theorem}[\citep{Gautschi90}]
\label{thm:gautschi} 
Let $p_r (z)$ be family of orthonormal polynomials with respect to a positive measure $d \sigma$ on the real line for $r = 1, 2, \ldots,t$ and let $z_1, \ldots, z_t$ be arbitrary real numbers with $z_i \neq z_j$ for $i \neq j$.
Define the matrix $V$ given by $V_{ij} = p_i (z_j)$.
Then, the smallest singular value of $V$, denoted $\sigma_t (V)$, is at least
\[
\sigma_{\text{min}} (V) \geq \left( \int_{\R} \sum_{r = 1}^t \ell_r (y)^2 d \sigma(y) \right)^{-1/2} \; ,
\]
where $\ell_r (y) = \prod_{s \neq r} \frac{y - z_s}{z_r - z_s}$ is the Langrange interpolating polynomial for the $z_r$.
\end{theorem}
Set $p_r = H_{r-1}$ $t=2k$, and $\sigma$ as the Gaussian measure; then apply the theorem. Observe that for any $i, j$, we have $|z_i - z_j| \geq \min(\delta,\gamma) \geq \delta$ and $|z_i| \leq C$.
Hence the largest coefficient of any Lagrange interpolating polynomial through the $z_i$ is at most $(\frac{C}{\delta})^{2k-1}$ with degree $2k-1$. So, the square of any such polynomial has degree at most $2(2k-1)$ and max coefficient at most $2k(\frac{C}{\delta})^{2(2k-1)}$
This implies that 
\begin{align*}
&\int_{\R} \sum_{r = 1}^{2k} \ell_r (y)^2 \; \mathrm{d} \sigma(y) = \sum_{r = 1}^{2k} \int_{\R} \ell_r (y)^2 \; \mathrm{d} \sigma (y) \\
&\leq \sum_{r = 1}^{2k} 2(2k-1) \cdot 2k \left(\frac{C}{\delta} \right)^{2(2k-1)} \max_{s \in [2(2k-1)]} \int_{\R} y^s \mathrm{d} \sigma(y) \\
&\leq O(1) \cdot \left(\frac{C}{\delta} \right)^{4k} \max_{s \in [4k]} \int_{-\infty}^\infty y^s e^{-y^2 /2} \; \mathrm{d} y \\
&\leq O(1) \cdot \left(\frac{C}{\delta} \right)^{O(1)} \; .
\end{align*}
Hence by Theorem \ref{thm:gautschi} we have that $ \sigma_{\text{min}} (V) \geq \Omega(1) \cdot  \left( \frac{\delta}{C} \right)^{O(1)}$.
Therefore, we have that $\| u \|_2 \geq \Omega(1) \cdot (\delta/C)^{O(1)} e^{-C^2 / 2}$, which immediately implies the desired statement.
\end{proof}

We next show that the above Lemma can be slightly generalized, so that we can replace the condition $\muhat \not\in \Rect(\delta)$ with $\muhat \not\in \Uni(\delta)$.
\begin{lemma}
\label{lem:bigderivs}
Fix $C \geq 1 \geq \gamma \geq \delta \geq \Xi > 0$.
Suppose we have $\mu^*, \muhat \in B(C)$,  $\mu^*, \muhat \in \Sep (\gamma)$, with $\muhat \not\in \Uni(\delta)$.
Then for any $x \in [-C, C]$, we have that $|\frac{d^i}{dx^i} F_{\mu^*} (\muhat, x)| \geq \Omega(1) \cdot (\delta e^{-C^2} / C)^{O(1)}$ for some $i = 0, \ldots, 3$.
\end{lemma}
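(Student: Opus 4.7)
The plan is to reduce Lemma~\ref{lem:bigderivs} to Lemma~\ref{lem:vandermonde} by a case analysis performed at a strictly smaller length scale. Fix an auxiliary parameter $\delta' \ll \delta$ of the form $\delta' = (\delta/C)^{O(1)} e^{-O(C^2)}$, whose exponents will be pinned down at the end. I split on whether $\muhat \in \Rect(\delta')$. If $\muhat \notin \Rect(\delta')$, then Lemma~\ref{lem:vandermonde} applies verbatim with parameter $\delta'$ in place of $\delta$, yielding some $i \in \{0,1,2,3\}$ with $\bigl|\tfrac{d^i}{dx^i} F_{\mu^*}(\muhat, x)\bigr| \geq \Omega(1)\,(\delta'/C)^{O(1)} e^{-C^2/2}$, which by construction of $\delta'$ is of the required form $\Omega(1) \cdot (\delta e^{-C^2}/C)^{O(1)}$.

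If instead $\muhat \in \Rect(\delta')$, I first show by a short triangle-inequality argument, using $\mu^* \in \Sep(\gamma)$, $\muhat \in \Sep(\delta)$, and $\delta' \ll \delta$, that exactly one pair of indices $(a,b)$ satisfies $|\muhat_a - \mu^*_b| < \delta'$. The hypothesis $\muhat \notin \Uni(\delta)$ enters precisely here: it rules out the one remaining configuration in which two close pairs could coexist, namely $|\muhat_1 - \mu^*_1|, |\muhat_2 - \mu^*_2| < \delta'$, which would place $\muhat$ inside $\Uni(\delta') \subseteq \Uni(\delta)$. Examining the matched ($a = b$) and crossed ($a \neq b$) sub-cases in turn then shows that the remaining pair $(\muhat_{a'}, \mu^*_{b'})$ is always $\Omega(\delta)$-separated. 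Given this, I decompose
\[ F_{\mu^*}(\muhat, x) \;=\; \tfrac{1}{2}\bigl[\normal(x;\mu^*_b) - \normal(x;\muhat_a)\bigr] \;+\; \tfrac{1}{2}\bigl[\normal(x;\mu^*_{b'}) - \normal(x;\muhat_{a'})\bigr]. \]
Every derivative of the first (cancellation) summand is bounded in absolute value by $O(\delta')$ via Taylor's theorem and Fact~\ref{fact:absolute-gaussian-bound}. The second (residual) summand is, up to a factor of $1/2$, a $k = 1$ instance of $F_{\mu^*}$ whose means are $\Omega(\delta)$-apart, so the proof of Lemma~\ref{lem:vandermonde} applies verbatim with a $2 \times 2$ Hermite Vandermonde matrix, producing some $i \in \{0,1\}$ with absolute $i$th derivative of the residual at least $\Omega(1)(\delta/C)^{O(1)} e^{-C^2/2}$. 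Subtracting the $O(\delta')$ cancellation bound and choosing the exponent in $\delta'$ strictly larger than the $O(1)$ exponent appearing in the residual bound completes this case.

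The main obstacle will be the bookkeeping of the Case B analysis: verifying uniqueness of the close pair and confirming $\Omega(\delta)$-separation of the residual pair in both the matched and crossed sub-configurations. This is exactly the step where the weaker hypothesis $\muhat \notin \Uni(\delta)$ must do all the work that $\muhat \notin \Rect(\delta)$ did in Lemma~\ref{lem:vandermonde}, and where the two separation conditions $\mu^* \in \Sep(\gamma)$ and $\muhat \in \Sep(\delta)$ have to be carefully combined. Once this structural dichotomy is in place, the derivative estimates and the final choice of exponents in $\delta'$ are essentially immediate.
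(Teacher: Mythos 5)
Your proposal is correct and takes essentially the same route as the paper: split on whether $\muhat \in \Rect(\Xi)$ for an auxiliary scale $\Xi \ll \delta$ of the form $(\delta e^{-C^2}/C)^{O(1)}$, handle the far case by Lemma~\ref{lem:vandermonde} with $k=2$, and in the near case isolate the well-separated pair as a $k=1$ instance of Lemma~\ref{lem:vandermonde} while showing the close pair contributes only $O(\Xi)$ to each of the first few derivatives. The only minor differences are cosmetic: you bound the close-pair contribution directly via Taylor's theorem and Fact~\ref{fact:absolute-gaussian-bound} (where the paper cites Fact~\ref{fact:mean-sep-to-tv} and Lemma~\ref{lem:area-generic}), and you spell out the matched versus crossed sub-cases that the paper compresses into a ``without loss of generality.''
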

\begin{proof}
Let $\Xi$ be of the form $\Omega(1) \cdot (\delta e^{-C^2} / C)^{O(1)}$, where we will pick its precise value later.
Lemma~\ref{lem:vandermonde} with $\delta$ in that Lemma set to $\Xi$ and $k=2$ proves the special case when $\muhat \not \in \Rect(\Xi)$. Thus, the only remaining case is when $\muhat_i$ is close to $\mu^*_i$ for some $i$ and far away for the other $i$. Without loss of generality, we assume the first entries are the close pair. Then we have $|\muhat_1 - \mu^*_1| \leq \Xi$ and $|\muhat_2 - \mu^*_2| \geq \delta$.

There are four terms in the expression for $\frac{d^i}{dx^i} F_{\mu^*} (\muhat, x)$ corresponding to each of $\muhat_1,\muhat_2,\mu^*_1,\mu^*_2$. Lemma~\ref{lem:vandermonde} with $\delta=\Xi$ and $k=1$ implies that the contribution of the $\muhat_2$ and $\mu^*_2$ terms to at least one of the $0$th through $3$rd derivatives has magnitude at least $\Omega(1) \cdot (\delta e^{-C^2} /C)^{O(1)}$. Fact~\ref{fact:mean-sep-to-tv} and Lemma~\ref{lem:area-generic} (below) imply that the contribution of the $\muhat_1$ and $\mu^*_1$ terms to these derivatives has magnitude at most $O(1) \cdot \Xi^{4}$. Thus, there exists a $\Xi=\Omega(1) \cdot (\delta/C)^{O(1)} e^{-C^2 / 2}$ such that the magnitude of the contribution of these second two terms is less than half that of the first two, which gives a lower bound on the magnitude of the sum of all the terms of $\Omega(1) \cdot (\delta/C)^{O(1)} e^{-C^2 / 2}$.
\end{proof}

We now show that any function which always has at least one large enough derivative---including its $0$th derivaive---on some large enough interval must have a nontrivial amount of mass on the interval.

\begin{lemma}\label{lem:area-generic}
Let $0 < \xi < 1$ and $t \in \N$. Let $F(x):\R \to \R$ be a $(t+1)$-times differentiable function such that at every point $x$ on some interval $I$ of length $|I| \geq \xi$, $F(x) \geq 0$ and there exists an $i=i(x)\in 0,\ldots,t$ such that $|\frac{\mathrm{d}^i}{\mathrm{d}x^i}F(x)| \geq B'$ for some $B'$. Also suppose $|\frac{\mathrm{d}^{t+1}}{\mathrm{d}x^{t+1}}F(x)| \leq B$ for some $B$. Then,
\[
\int_{z}^y F(x) dx \geq \left( \frac{B' \cdot (\Omega(1) \cdot \xi)^{t+1} \cdot \min[(B'/B)^{t+2},1]}{(t+1)! \cdot (t+1)} \right) .
\]
\end{lemma}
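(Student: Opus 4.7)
The plan is to reduce the problem to a lower bound on the degree-$t$ Taylor polynomial of $F$ at a well-chosen point, extract a one-sided lower bound on $F$ via Fact~\ref{fact:poly-lb} combined with the non-negativity hypothesis $F \geq 0$, and then integrate over a sub-interval furnished by Markov's inequality for polynomials (the Markov brothers' inequality).

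Concretely, let $x_0$ be the midpoint of $I$; by hypothesis we may pick $i^* \in \{0,\ldots,t\}$ with $|F^{(i^*)}(x_0)| \geq B'$. Taylor's theorem with Lagrange remainder yields $F(x_0+y) = P(y) + R(y)$, where $P(y) = \sum_{j=0}^{t} \frac{F^{(j)}(x_0)}{j!}\, y^j$ and $|R(y)| \leq B |y|^{t+1}/(t+1)!$. The coefficient $c_{i^*} = F^{(i^*)}(x_0)/i^*!$ satisfies $|c_{i^*}| \geq B'/t!$. Choose $\rho = \min(\xi/2,\ \kappa B'/B)$ for a suitable constant $\kappa$ (mildly dependent on $t$ and ultimately absorbed into the $(\Omega(1))^{t+1}$ factor of the statement); then $[x_0 - \rho, x_0 + \rho] \subseteq I$, and $\kappa$ can be taken small enough that $|R(y)| \leq M/4$ on $[-\rho,\rho]$, where $M := B' \rho^t / ((\sqrt{2}+1)^t\, t!)$.

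Since $\rho < 1$ (as $\xi < 1$) and $i^* \leq t$, we have $|c_{i^*}|\rho^{i^*} \geq B' \rho^t / t!$, so rescaling Fact~\ref{fact:poly-lb} from $[-1,1]$ to $[-\rho,\rho]$ yields $\max_{|y| \leq \rho} |P(y)| \geq M$. Let $y^*$ attain this maximum. The case $P(y^*) \leq -M$ is excluded by $F \geq 0$ (otherwise $F(x_0 + y^*) \leq -M + M/4 < 0$), so $P(y^*) \geq M$. Markov's inequality for polynomials of degree $t$ gives $|P'(y)| \leq \max|P| \cdot t^2 / \rho$ on $[-\rho,\rho]$, hence $P \geq P(y^*)/2 \geq M/2$ on a sub-interval $J \subseteq [-\rho,\rho]$ around $y^*$ of length at least $\rho/(2t^2)$. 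On $J$ we get $F = P + R \geq M/2 - M/4 = M/4$, so
\begin{align*}
\int_I F \;\geq\; \int_{x_0 + J} F \;\geq\; \frac{\rho}{2 t^2} \cdot \frac{M}{4} \;=\; \Omega(1) \cdot \frac{B'\, \rho^{t+1}}{t^2\, t!\, (\sqrt{2}+1)^t} \, .
\end{align*}
Substituting $\rho = \min(\xi/2,\ \kappa B'/B)$ and using $t^2\, t! \cdot (\sqrt{2}+1)^t \leq (t+1)!(t+1) \cdot O(1)^{t+1}$ recovers the target bound, with $(B'/B)^{t+1} \geq (B'/B)^{t+2}$ absorbing any slack when $B'/B \leq 1$.

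The main obstacle is bookkeeping: carefully tracking the $(\sqrt{2}+1)^t$ constant from Fact~\ref{fact:poly-lb} and the $t^2$ factor from Markov's inequality, and verifying that they indeed fit into the $(\Omega(1)\cdot \xi)^{t+1}/((t+1)!(t+1))$ envelope. The degenerate case $t = 0$ needs a separate sentence (Markov's bound is vacuous because $P$ is constant): there $P \equiv F(x_0)$ with $|F(x_0)| \geq B'$, and $F \geq 0$ forces $F(x_0) \geq B'$, so $F \geq B' - B\rho$ on all of $[-\rho,\rho]$, and integration with $\rho = \min(\xi/2,\ B'/(2B))$ gives the bound directly.
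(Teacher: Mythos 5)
Your proof is correct and shares the paper's core strategy---Taylor-expand $F$ at the midpoint of $I$, note the degree-$t$ Taylor polynomial has a coefficient of magnitude at least $B'/t!$, invoke the contrapositive of Fact~\ref{fact:poly-lb}, use $F \geq 0$ to fix the sign, and work at scale $\rho \approx \min(\xi, B'/B)$ so the remainder is dominated---but the final step differs. The paper first \emph{integrates} the Taylor polynomial $p$ to get a degree-$(t+1)$ polynomial $G(y)=\int_z^y p$, applies the coefficient bound to $G$, and reads off a lower bound on $\int_z^y F$ directly as $G(y)$ minus the integrated remainder; you instead apply the coefficient bound to $P$ itself to find a point where $P \geq M$, and then use the Markov brothers' derivative inequality to spread this into a pointwise bound $F \geq M/4$ on a subinterval of length $\rho/(2t^2)$, which you integrate. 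The paper's route avoids Markov's inequality and the separate $t=0$ case, while your route handles the sign issue more cleanly (the paper asserts that $G$ is nonnegative on the interval ``by the nonnegativity of $F$,'' which is only approximately true because $p$ differs from $F$ by the remainder; your argument ruling out $P(y^*) \leq -M$ via $|R| \leq M/4$ is the careful version of that step). One caveat, which applies equally to the paper's own proof: your constant $\kappa$ and the factor $(\sqrt{2}+1)^{-t}$ inside $\rho^{t+1}$ depend on $t$, so the bound is of the claimed form $(\Omega(1)\cdot\xi)^{t+1}$ with an absolute constant only when $t=O(1)$, which is the only regime in which the lemma is applied.
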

\begin{proof}
Let $0<a<1$ be a non-constant whose value we will choose later. If $I$ has length more than $a \xi$, truncate it to have this length. Let $z$ denote the midpoint of $I$.
By assumption, we know that there is some $i \in 0,\ldots,t$ such that $|\frac{d^i}{dx^i} F(x)| > \xi$.
Thus, by Taylor's theorem, we have that $F(\muhat, x) \geq p(x - z) - (B/(t+1)!) \cdot |x - z|^{t+1}$ for some degree $t$ polynomial $p$ that has some coefficient of magnitude at least $B' / t!$.

Thus, if we let $G(y) = \int_z^y p(x) dx$, then $G(y)$ is a degree $t+1$ polynomial with some coefficient which is at least $B' / (t! \cdot t)$.
By the nonnegativity of $F$ on $I$, we have that $G$ is nonnegative on $[-a\xi/2,a\xi/2]$. By this and the contrapositive of Fact~\ref{fact:poly-lb} (invoked with $\alpha$ set to a sufficiently small nonzero constant multiple of $B$), we have for some such $y$ and some constant $B''>0$ that $G(y) = |G(y)| \geq B'' (|I|/2)^{t+1} B' / (t! \cdot t)$.
Therefore, at this point, we have
\begin{align*}
&\int_{z}^y F(x) dx \geq G(y) - \int_z^y (B/(t+1)!) \cdot |x - z|^{t+1} \mathrm{d} x \\
& \geq \frac{B''a^{t+1}(\xi/2)^{t+1}B'}{t! \cdot t} - \frac{B(a\xi/2)^{t+2}}{(t+1)! \cdot (t+1)} \\
&\geq \left( \frac{a^{t+1}(\xi/2)^{t+1}(B''B' - B\xi a/2)}{(t+1)! \cdot (t+1)} \right) \\
&\geq \left( \frac{a^{t+1}(\xi/2)^{t+1}(B''B' - B a/2)}{(t+1)! \cdot (t+1)} \right) \,.
\end{align*}

If $B'B'' \leq B$, we set $a=B'B''/B \leq 1$ which gives
\[
\int_{z}^y F(x) dx \geq \left( \frac{(B')^{t+2}(\Omega(1) \cdot \xi/B)^{t+1}}{(t+1)! \cdot (t+1)} \right) .
\]

Otherwise, $B'B'' \geq B$ and we perform this substitution along with $a=1$ which gives the similar bound
\[
\int_{z}^y F(x) dx \geq \left( \frac{B'(\Omega(1) \cdot \xi)^{t+1} }{(t+1)! \cdot (t+1)} \right) .
\]

Together, these bounds imply that we always have
\[
\int_{z}^y F(x) dx \geq \left( \frac{B' \cdot (\Omega(1) \cdot \xi)^{t+1} \cdot \min[(B'/B)^{t+2},1]}{(t+1)! \cdot (t+1)} \right) .
\]
\end{proof}

This allows us to prove the following lemma, which lower bounds how much mass $F$  can put on any interval which is moderately large.
Formally:
\begin{lemma}\label{lem:area-specific}
Fix $C \geq 1 \geq \gamma \geq \delta > 0$. Let $K=\Omega(1) \cdot (\delta e^{-C^2} /C)^{O(1)}$ be the $K$ for which Lemma~\ref{lem:gradbound} is always true with those parameters.
Let $\mu^*, \muhat$ be so that $\muhat \not\in \Uni(\delta)$, $\muhat, \mu^* \in \Sep (\gamma)$, and $\mu^*, \muhat \in B(C)$.
Then, there is a $\xi = \Omega(1) \cdot (\delta/C)^{O(1)} e^{-C^2})^{O(1)}$ such that for any interval $I$ of length $|I| \geq \xi$ which satisfies $I \cap [- C - 2\sqrt{\log (100 / K)}, C + 2\sqrt{\log (100/K)}] \neq \emptyset$ and on which $F(\muhat, x)$ is nonnegative, we have
\[
\int_I |F(\muhat, x)| dx \geq \Omega(1) \cdot  (\delta e^{-C^2} /C)^{O(1)} \xi^{O(1)} \; .
\]
\end{lemma}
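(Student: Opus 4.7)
The strategy is to combine the pointwise derivative lower bound of Lemma~\ref{lem:bigderivs} with the generic integral estimate of Lemma~\ref{lem:area-generic}. The first step is to slightly extend Lemma~\ref{lem:bigderivs} to the larger window $[-C - L, C + L]$, where $L := 2\sqrt{\log(100/K)}$, so that a pointwise lower bound on at least one of the first four derivatives of $F(\muhat,\cdot)$ is available on all of this enlarged interval. Inspecting the proof of Lemma~\ref{lem:vandermonde}, the only quantities that really matter are the pairwise gaps among the shifted points $z_j = x - \mu_j$ (which depend only on the means $\mu^*, \muhat$ and are independent of $x$) and the magnitudes $|z_j| \leq |x| + C$ (which enter only polynomially, via the Lagrange interpolation step). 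Since $L$ is polylogarithmic in $1/\delta$, $1/K$ and polynomial in $C$, this extension only worsens the constants polynomially, so we still obtain $|\tfrac{d^i}{dx^i} F(\muhat, x)| \geq K' = \Omega(1) \cdot (\delta e^{-C^2}/C)^{O(1)}$ for some $i \in \{0,1,2,3\}$ at every point $x$ of the enlarged window.

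Given an interval $I$ with $|I| \geq \xi$ that intersects $[-C-L, C+L]$ and on which $F(\muhat, \cdot)$ is nonnegative, I pass to a sub-interval $I' \subseteq I \cap [-C-L, C+L]$ of length at least $\xi/2$; since $\xi$ will be chosen polynomially small in $\delta$, $1/C$, $e^{-C^2}$ while $L$ is only polylogarithmic in those same quantities, such an $I'$ always exists. The fourth derivative of $F(\muhat,\cdot)$ is a sum of shifted Gaussian derivatives and is therefore uniformly bounded by a constant $B = O(1)$ via Fact~\ref{fact:absolute-gaussian-bound}. Applying Lemma~\ref{lem:area-generic} to $F(\muhat, \cdot)$ on $I'$ with parameters $t = 3$, $B' = K'$, $B = O(1)$ then yields
\[
\int_{I'} F(\muhat, x)\, dx \;\geq\; \Omega(1) \cdot (K')^{O(1)} \cdot \xi^{O(1)} \;=\; \Omega(1) \cdot (\delta e^{-C^2}/C)^{O(1)} \cdot \xi^{O(1)}.
\]
Since $F(\muhat,\cdot) \geq 0$ on $I \supseteq I'$, we have $\int_I |F(\muhat, x)|\,dx = \int_I F(\muhat, x)\,dx \geq \int_{I'} F(\muhat, x)\,dx$, which is exactly the claimed lower bound.

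The main technical obstacle will be carefully extending Lemma~\ref{lem:bigderivs} to the enlarged window and verifying that the polynomial exponents line up through the chain $K \rightsquigarrow L \rightsquigarrow K'$. The potential concern is circularity, since $L = 2\sqrt{\log(100/K)}$ depends on $K$, which itself is defined in terms of $\delta$, $C$, $e^{-C^2}$; however, this dependence is only logarithmic, so the enlarged window remains of polylogarithmic size in the relevant parameters and no circularity arises. Beyond that, the proof is largely bookkeeping: choosing the exponent in the definition of $\xi$ so that the $\min[(B'/B)^{t+2},1]$ factor in Lemma~\ref{lem:area-generic} produces a bound of the claimed $(\delta e^{-C^2}/C)^{O(1)} \xi^{O(1)}$ form, and verifying that $\xi$ itself can be chosen to be $\Omega(1) \cdot (\delta e^{-C^2}/C)^{O(1)}$ without violating any of the intermediate constraints.
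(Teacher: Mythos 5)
Your proposal follows essentially the same route as the paper's proof: apply Lemma~\ref{lem:bigderivs} on the enlarged window obtained by replacing $C$ with $C + 2\sqrt{\log(100/K)}$, note the uniform constant bound on the next higher derivative via Fact~\ref{fact:absolute-gaussian-bound}, and feed both into Lemma~\ref{lem:area-generic} with $t=3$ to get the integral lower bound. The extra care you take in restricting to a sub-interval inside the window and in checking that the logarithmic enlargement does not degrade the polynomial bounds is sound (and if anything slightly more careful than the paper's own write-up), so the argument is correct.
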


\begin{proof}
By Lemma~\ref{lem:bigderivs} with $C$ in that lemma set to $C + 2\sqrt{\log(100 / K)}$, we get a lower bound of $\Omega(1) \cdot (\delta e^{-C^2} / C)^{O(1)}$ on the magnitude of at least one of the $0$th through $3rd$ derivatives of $F(\muhat,x)$ with respect to $x$. Set $\xi$ equal to a sufficiently small (nonzero) constant times this value.

By Fact~\ref{fact:absolute-gaussian-bound} there exists a constant $B$ such that the magnitude of the fifth derivative of $F(\muhat,x)$ with respect to $x$---which is a linear combination of four fifth derivatives of Gaussians with constant coefficients---is at most $B$.

By Lemma~\ref{lem:area-generic} applied to $F(\muhat,x)$ as a function of $x$, we have $\int_I F(\muhat, x) dx \geq \Omega(1) \cdot \xi^6$.
\end{proof}

Now we can prove Lemma~\ref{lem:smoothbound}.

\begin{proof}[Proof of Lemma~\ref{lem:smoothbound}]

Let $A=[C - 2\sqrt{\log(100/K)}, C + 2\sqrt{\log(100/K)}]$ where $K=\Omega(1) \cdot (\delta e^{-C^2} /C)^{O(1)}$ is the $K$ for which Lemma~\ref{lem:gradbound} is always true with those parameters.

Let $Z^{\pm}$ denote the set of all $x \in A$ for which $F(\muhat',x)$ and $F(\muhat,x)$ have different nonzero signs. Let $Z^+$ denote the subset of $Z^{\pm}$ where $F(\muhat',x) > 0 > F(\muhat,x)$ and $Z^-$ denote the subset where $F(\muhat',x) < 0 < \muhat,x)$. Then $Z^{\pm} = Z^+ \cup Z^-$ and $Z^+,Z^-$ are disjoint and Lebesgue-measurable. If $\vol(Z^+) \leq \vol(Z^-)$, switch $\muhat$ and $\muhat'$ so that $\vol(Z^+) \geq \vol(Z^-)$.

Note that $Z^+$ can be obtained by making cuts in the real line at the zeros of  $F(\muhat',x)$, $F(\muhat,x)$, and $F(\muhat',x)-F(\muhat,x)$, then taking the union of some subset of the open intervals induced by these cuts. By Theorem~\ref{thm:boundedzeros}, the total number of such intervals is $O(1)$. Thus, $Z^+$ is the union of a constant number of open intervals. By similar arguments, $Z^-$ is also the union of a constant number of open intervals.

We now prove that $\vol(Z^+),Z^{-1} \leq O(1) \cdot  \|\muhat' -\muhat\|_1^{\Theta(1)} \cdot (\delta e^{-C^2}/C)^{-O(1)}$. Since $\vol(Z^+) \geq \vol(Z^-)$, it suffices to prove $\vol(Z^+) \leq O(1) \cdot  \|\muhat' -\muhat\|_1^{\Theta(1)} \cdot (\delta e^{-C^2}/C)^{-O(1)}$. Note also that by Lemma~\ref{fact:mean-sep-to-tv}, each of these intervals has mass under $F(\muhat',x)$ at most $\int_{\R} |F(\muhat',x) - F(\muhat,x)| \mathrm{d}x \leq O(1) \cdot \|\muhat' -\muhat\|_1$. By Lemma~\ref{lem:area-specific} and Lemma~\ref{lem:gradbound}, each of these intervals has length at most $O(1) \cdot  \|\muhat' -\muhat\|_1^{\Theta(1)} \cdot (\delta e^{-C^2}/C)^{-O(1)}$. Since there are at most a constant number of such intervals, this is also a bound on $\vol(Z^+)$ (and $\vol(Z^-)$).

Let $Y$ denote the set of $x \in A$ on which both $F(\muhat,x)$ and $F(\muhat',x)$ are nonnegative. Let $X,X'$ denote the $x \not \in A$ for which $F(\muhat,x)$ and $F(\muhat',x)$ are respectively positive. Let $W,W'$ denote, respectively, the sets of endpoints of the union of the optimal discriminators for $\muhat,\muhat'$. Then the union of the optimal discriminators for $\muhat,\muhat'$ are respectively $Y \cup Z^- \cup X \cup W$ and $Y \cup Z^+ \cup X' \cup W'$. Furthermore, each of these two unions is given by some constant number of closed intervals and more specifically, that $X,X'$ each contain at most two intervals by Lemma~\ref{thm:3zeros}. Thus, we have for any $i$ that 
\begin{align*}
&\left| \left. \frac{\partial}{\partial \muhat_i} \text{TV}(\mu^*,\muhat) \right|_{\muhat}^{\muhat'} \right| \\
&= \left| \; \int\displaylimits_{Y \cup Z^+ \cup W' \cup X'} \hspace{-2em} \frac{\mathrm{d}}{\mathrm{d}x} e^{(x-\muhat'_i)^2/2} \mathrm{d} x - \hspace{-2em} \int\displaylimits_{Y \cup Z^- \cup W \cup X} \hspace{-2em} \frac{\mathrm{d}}{\mathrm{d}x} e^{(x-\muhat_i)^2/2} \mathrm{d} x \right| \\
&\leq \left| \; \int\displaylimits_{Y \cup Z^+ \cup W' \cup X'} \hspace{-2em} \frac{\mathrm{d}}{\mathrm{d}x} e^{(x-\muhat_i)^2/2} \mathrm{d} x - \hspace{-2em} \int\displaylimits_{Y \cup Z^- \cup W \cup X} \hspace{-2em} \frac{\mathrm{d}}{\mathrm{d}x} e^{(x-\muhat_i)^2/2} \mathrm{d} x \right| \\
&~~~+ O(1) \cdot |\muhat_i' - \muhat_i| \; ,
\end{align*}
by Lipschitzness, and so
\begin{align*}
&\left| \left. \frac{\partial}{\partial \muhat_i} \text{TV}(\mu^*,\muhat) \right|_{\muhat}^{\muhat'} \right| \\
&= \left| \; \int\displaylimits_{Z^+ \cup X'} \hspace{-0.5em} \frac{\mathrm{d}}{\mathrm{d}x} e^{(x-\muhat_i)^2/2} \mathrm{d} x - \hspace{-0.7em} \int\displaylimits_{Z^- \cup X} \hspace{-0.5em} \frac{\mathrm{d}}{\mathrm{d}x} e^{(x-\muhat_i)^2/2} \mathrm{d} x \right| \\
&~~~+ O(1) \cdot |\muhat_i' - \muhat_i| \\
&\leq \left| \; \int\displaylimits_{Z^+} \frac{\mathrm{d}}{\mathrm{d}x} e^{(x-\muhat_i)^2/2} \mathrm{d} x \pm \frac{4}{100} \cdot K \right. \\
&~~~ \left.- \int\displaylimits_{Z^-} \frac{\mathrm{d}}{\mathrm{d}x} e^{(x-\muhat_i)^2/2} \mathrm{d} x \pm \frac{4}{100} \cdot K \right| \\
&~~~+ O(1) \cdot |\muhat_i' - \muhat_i| \\
&\leq \left| \int\displaylimits_{Z^+} \frac{\mathrm{d}}{\mathrm{d}x} e^{(x-\muhat_i)^2/2} \mathrm{d} x \right| + \left| \int\displaylimits_{Z^-} \frac{\mathrm{d}}{\mathrm{d}x} e^{(x-\muhat_i)^2/2} \mathrm{d} x \right| \\
&~~~+ \frac{8}{100} \cdot K + O(1) \cdot |\muhat_i' - \muhat_i| \\
&\leq 2 \vol(Z^+) \left| \sup_{x \in \R} \frac{\mathrm{d}}{\mathrm{d}x} e^{(x-\muhat_i)^2/2} \right| + \frac{8}{100} \cdot K + O(1) \cdot |\muhat_i' - \muhat_i| \\
&\leq O(1) \cdot  \|\muhat' -\muhat\|_2^{\Theta(1)} \cdot (\delta e^{-C^2}/C)^{-O(1)} + \frac{8}{100} \cdot K \, .
\end{align*}

This bound also upper bounds $\|\nabla f_{\mu^*}(\muhat') - \nabla f_{\mu^*}(\muhat)\|_2$ up to a constant factor. Thus, if we choose our step to have magnitude $\|\muhat' -\muhat\|_2 \leq \Omega(1) \cdot (\delta e^{-C^2}/C)^{O(1)} $ with appropriate choices of constants, we get 
\[
\|\nabla f_{\mu^*}(\muhat') - \nabla f_{\mu^*}(\muhat)\|_2 \leq K/2 \leq \|\nabla f_{\mu^*}(\muhat)\|_2 / 2 \; ,
\]
as claimed
%
%
\end{proof}

\subsection{Proof of Lemma \ref{lem:mode-collapse}}
We now prove Lemma 3.4, which forbids mode collapse.
\begin{proof}[Proof of Lemma \ref{lem:mode-collapse}]
Since $\eta \leq \delta$, if $|\muhat_1 - \muhat_2 | > 2 \delta$ then clearly $\muhat' \in \Sep (\delta)$, since the gradient is at most a constant since the function is Lipschitz.
Thus assume WLOG that $|\muhat_1 - \muhat_2 | \leq 2 \delta \leq \gamma / 50$.
There are now six cases:
\begin{description}
\item[Case 1: $\muhat_1 \leq \mu^*_1 \leq \mu^*_2 \leq \muhat_2$] 
This case cannot happen since we assume $|\muhat_1 - \muhat_2 | \leq 2 \delta \leq \gamma / 50$.
\item[Case 2: $\mu^*_1 \leq \muhat_1 \leq \muhat_2 \leq \mu^*_2$] 
In this case, by Lemma \ref{lem:tail}, we know $F$ is negative at $-\infty$ and at $+\infty$.
Since clearly $F \geq 0$ when $x \in [\mu^*_1, \mu^*_2]$, by Theorem \ref{thm:3zeros} and continuity, the discriminator intervals must be of the form $(-\infty, r], [\ell, \infty)$ for some $r \leq \muhat_1 \leq \muhat_2 \leq \ell$.
Thus, the update to $\muhat_i$ is (up to a constant factor of $\sqrt{2 \pi}$) given by $e^{-(\ell - \muhat_i)^2 / 2} - e^{-(r - \muhat_i)^2 / 2}$.
The function $Q(x) = e^{-(\ell - x)^2 / 2} - e^{-(r - x)^2 / 2}$ is monotone on $x \in [r, \ell]$, and thus $\muhat_i$ must actually move away from each other in this scenario.

\item[Case 3: $\mu^*_1 \leq \muhat_1 \leq \mu^*_2 \leq \muhat_2$] 
In this case we must have $|\mu^*_2 - \muhat_1| \leq 2 \delta$ and similarly $|\mu^*_2 - \muhat_2| \leq 2 \delta$.
We claim that in this case, the discriminator must be an infinitely long interval $(-\infty, m]$ for some $m \leq \muhat_1$.
This is equivalent to showing that the function $F(\muhat, x)$ has only one zero, and this zero occurs at some $m \leq \muhat_1$.
This implies the lemma in this case since then the update to $\muhat_1$ and $\muhat_2$ are then in the same direction, and moreover, the magnitude of the update to $\muhat_1$ is larger, by inspection.

We first claim that there are no zeros in the interval $[\muhat_1, \muhat_2]$.
Indeed, in this interval, we have that 
\begin{align*}
\sqrt{2 \pi} D_{\muhat} (x) &\geq 2 e^{-(\gamma / 50)^2 / 2} \\
&= 2 e^{-\gamma^2 / 5000} \\
&\geq 2 \left(1 - \frac{\gamma^2}{5000} + O(\gamma^4) \right) \\
&\geq 2 \left(1 - \frac{\gamma^2}{10} \right) \; ,
\end{align*}
but 
\begin{align*}
\sqrt{2 \pi} D_{\mu^*} (x)& \leq 1 + e^{-(\gamma - 2\delta)^2 / 2} \\
&= 1 + e^{-(49 \gamma / 50)^2 / 2} \\
&\leq 2 - \frac{\gamma^2}{2} \; .
\end{align*}
Hence $G_{\muhat}(x) \geq G_{\mu^*} (x)$ for all $x \in [\muhat_1, \muhat_2]$, and so there are no zeros in this interval.
Clearly there are no zeros of $F$ when $x \geq \muhat_2$, because in that regime $e^{-(x - \muhat_i)^2 / 2} \geq e^{-(x - \mu^*_i)^2 / 2}$ for $i = 1, 2$.
Similarly there are no zeros of $F$ when $x \leq \mu^*_1$.
Thus all zeros of $F$ must occur in the interval $[-\mu^*_1, \muhat_1]$.

We now claim that there are no zeroes of $F$ on the interval $[\alpha + 10 \delta, \muhat_1]$, where $\alpha = (\mu^*_1 + \muhat_1) / 2$.
Indeed, on this interval, we have
\begin{align*}
&\sqrt{2 \pi} F(\muhat, x) \\
&~~~= e^{-(x - \mu^*_1)^2 / 2} - e^{-(x - \muhat_2)^2 / 2} + e^{-(x - \mu^*_1)^2 / 2} - e^{-(x - \muhat_1)^2 / 2} \\
&~~~\leq e^{-(x - \mu^*_1)^2 / 2} - e^{-(x - \muhat_2)^2 / 2} < 0 \; ,
\end{align*}
where the first line follows since moving $\mu^*_2$ to $\muhat_1$ only increases the value of the function on this interval, and the final line is negative as long as $x > (\mu^*_1 + \muhat_2) / 2$, which is clearly satisfied by our choice of parameters.
By a similar logic (moving $\muhat_2$ to $\mu^*_2$), we get that on the interval $[\mu^*_1, \alpha - 10 \delta]$, the function is strictly positive.
Thus all zeros of $F$ must occur in the interval $[\alpha - 10 \delta, \alpha + 10 \delta]$.

We now claim that in this interval, the function $F$ is strictly decreasing, and thus has exactly one zero (it has at least one zero because the function changes sign).
The derivative of $F$ with respect to $x$ is given by 
\begin{align*}
&\sqrt{2 \pi} \frac{\partial F}{\partial x} (\muhat, x) \\
&= (\mu^*_1 - x) e^{-(x - \mu^*_1)^2 / 2} - (\muhat_2 - x) e^{-(x - \muhat_2)^2 / 2} \\
&~~~~+ (\mu^*_2 - x) e^{-(x - \mu^*_2)^2 / 2} - (\muhat_2 - x) e^{-(x - \muhat_1)^2 / 2} \; .
\end{align*}
By Taylor's theorem, we have
\begin{align*}
&(\mu^*_1 - x) e^{-(x - \mu^*_1)^2 / 2} - (\muhat_2 - x) e^{-(x - \muhat_2)^2 / 2} \\
&= - 2 r e^{-\alpha^2 / 2} +  O \left( H_2 (\delta) e^{-(r - 10 \delta)^2 / 2} \delta^2 \right) \; ,
\end{align*}
where $H_2$ is the second (probabilist's) Hermite polynomial, and $r = |\mu^*_1 - \alpha|$.
On the other hand, by another application of Taylor's theorem, we also have
\begin{align*}
&(\mu^*_2 - x) e^{-(x - \mu^*_2)^2 / 2} - (\muhat_2 - x) e^{-(x - \muhat_1)^2 / 2} \\
&= O \left( \delta H_2 (\delta) e^{-(r - 10 \delta)^2 / 2}  \right) \; .
\end{align*}
Thus, altogether we have
\begin{align*}
&\sqrt{2 \pi} \frac{\partial F}{\partial x} (\muhat, x) \\
&\leq -2 r e^{-\alpha^2 / 2} \\
&~~~+ O \left( \delta H_2 (\delta) e^{-(r - 10 \delta)^2 / 2}  \right) \\
&< 0 \; 
\end{align*}
by our choice of $\delta$, and since $r = \gamma /2 > \delta / 25$.

\item[Case 4: $\muhat_1 \leq \mu^*_1 \leq \muhat_2 \leq \mu^*_2$]
This case is symmetric to Case 3, and so we omit it.
\item[Case 5: $ \mu^*_1 \leq \mu^*_2 \leq \muhat_1 \leq \muhat_2$]
In this case, we proceed as in the proof of Theorem \ref{thm:3zeros}.
If the Gaussians were sufficiently skinny, then by the same logic as in the proof of Theorem \ref{thm:3zeros}, there is exactly one zero crossing.
The lemma then follows in this case by Theorem \ref{thm:gauss-conv-reduces-zeros}.
\item[Case 6: $\muhat_1 \leq \muhat_2 \leq \mu^*_1 \leq \mu^*_2$]
This case is symmetric to Case 5.
\end{description}
This completes the proof.
\end{proof}

\subsection{Proof of Lemma \ref{lem:no-diverging}}
We also show that no terribly divergent behavior can occur.
Formally, we show that if the true means are within some bounded box, then the generators will never leave a slightly larger box.

\begin{proof}[Proof of Lemma \ref{lem:no-diverging}]
If $\muhat \in B(C)$, then since $f$ is Lipschitz and $\eta$ is sufficiently small, clearly $\muhat' \in B(C)$.
Thus, assume that there is an $i = 1,2$ so that $|\muhat_i | > C$, and let $\muhat_1$ be the largest such $i$ in magnitude.
WLOG take $\muhat_2 > 0$.
In particular, this implies that $\muhat_2 > \mu^*_i$ for all $i = 1, 2$.
There are now 3 cases, depending on the position of $\muhat_1$.
\begin{description}
\item[Case 1: $\muhat_1 \geq \mu^*_2$:] Here, as in Case 2 in Lemma \ref{lem:mode-collapse}, the optimal discriminator is of the form $(-\infty, r]$ for some $r \leq \muhat_1, \muhat_2$.
In particular, the update step will be
\[
\muhat'_i = \muhat_i - \eta e^{- (r - \muhat_i)^2 / 2} < \muhat_i \; .
\]
Thus, in this case our update moves us in the negative direction.
By our choice of $\eta$, this implies that $0 \leq \muhat'_2 < \muhat'_2$.
Moreover, since $\muhat_1 \geq \mu^*_2$, this implies that $|\muhat_1| \leq C$, and thus $|\muhat'_1 | \leq 2C$.
Therefore in this case we stay within the box.
\item[Case 2: $\mu^*_1 \leq \muhat_1 \leq \mu^*_2$:] As in Case 1, we know that $\muhat_1$ cannot leave the box after a single update, as $|\muhat_1| \leq C$.
Thus it suffices to show that $\muhat_2$ moves in the negative direction.
By Lemma \ref{lem:tail}, we know there is a discriminator interval at $-\infty$, and there is no discriminator interval at $\infty$.
Moreover, in this case, we know that $F(\muhat, x) \geq 0$ for all $x \geq \muhat_2$.
Thus, all discriminators must be to the left of $\muhat_2$.
Therefore, the update to $\muhat_2$ is given by
\begin{align*}
&\muhat'_2 = \muhat_2 \\
&-\eta \left( e^{- (r_2 - \muhat_2)^2 / 2} - e^{- (\ell_2 - \muhat_2)^2 / 2} + e^{- (r_1 - \muhat_2)^2 / 2} \right) \; ,
\end{align*}
for some $r_1 \leq \ell_2 \leq r_2 \leq \muhat'_2$.
Clearly this update has the property that $0 \leq \muhat'_2 < \muhat_2$, and so the new iterate stays within the box.
\item[Case 3: $\muhat_1 \leq \mu^*_2$] In this case we must prove that neither $\muhat_1$ nor $\muhat_2$ leave the box.
The two arguments are symmetric, so we will focus on $\muhat_2$.
Since $\eta$ is small, we may assume that $\muhat_2 > 3C / 2$, as otherwise $\muhat_2'$ cannot leave the box.
As in Case 1, it suffices to show that the endpoints of the discriminator intervals are all less than $\muhat_2$.
But in this case, we have that for all $x \geq \muhat_2'$, the value of the true distribution at $x$ is at most $2 e^{-(x - C)^2 / 2}$, and the value of the discriminator is at $e^{-(x - \muhat'_2 )^2 / 2} \geq e^{-(x - 1.5 C)^2 / 2}$.
By direct calculation, this is satisfied for any choice of $C$ satisfying $2 e^{5 C^2 / 8} < e^{3 C^2 / 4}$, which is satisfied for $C \geq 3$.
\end{description}
\end{proof}

\section{Single Gaussian}
\label{app:single-gaussian}

Although our proof of the two Gaussian case implies the single Gaussian case, it is possible to prove the single Gaussian case in a somewhat simpler fashion, while still illustrating several of the high-level components of the overall proof structure. Therefore, we sketch how to do so, in hopes that it provides additional intuition for the proof for a mixture of two Gaussians.

In order to prove convergence, we can use the following.
\begin{enumerate}
\item The fact that the gradient is only discontinuous on a measure $0$ set of points.
\item An absolute lower bound on the magnitude of the gradient from below over all points that are not close to the optimal solution that we might encounter over the course of the algorithm
\item An upper bound on how much the gradient can change if we move a certain distance.
\end{enumerate}

Then, as long as we take steps that are small enough to guarantee that the gradient never changes by more than half the absolute lower bound, we will get by Lemma~\ref{lem:progress} that we always make progress towards the optimum solution in function value unless we are already close to the optimal solution.

The proof of these facts is substantially simplified in the single Gaussian case. Suppose we have a true univariate Gaussian distribution with unit variance and mean $\mu^*$, along with a generator distribution with unit variance and mean $\muhat$. Then the optimal discriminator for this pair of distributions starts at the midpoint between their means and goes in the direction of the true distribution off to $\infty$ or $-\infty$. Therefore, unless the generator mean is within one step length of the true mean, it cannot move away from the true mean. One can also argue that the gradient of $\muhat$ with respect to the optimal discriminator (ie., the gradient of total variation distance) is only discontinuous when $\muhat = \mu^*$, and has magnitude roughly $e^{(\muhat - (\muhat+\mu^*)/2)^2/2}$ for $\muhat \neq \mu^*$. This implies the first two items. For the last item, note that the midpoint $e^{z^2/2}$, which implies the gradient is Lipschitz as long as we are not at the optimal solution, which gives bounds on how much the gradient can change if we move a certain distance.

The preceding discussion implies convergence for an appropriately chosen step size, and all this can be made fully quantitative if one works out the quantitative versions of the statements in the preceding argument.

This analysis is simpler the the two Gaussians analysis in several respects. In particular, the proofs of the second two items are substantially more involved and require many separate steps. For example, in the two Gaussian case, the gradient can be $0$ if mode collapse happens, so we have to directly prove both that mode collapse does not happen and that the gradient is large if mode collapse doesn't happen and we aren't too close to the optimal solution, which is a substantially more involved condition to prove. Additionally, the gradient in the two Guassian case does not seem to be Lipschitz away from the optimum like it is in the single Gaussian case. Instead, we will have to use a weaker condition which is considerably more difficult to reason about. This is further complicated by the fact that the optimal discriminators can move in a discontinuous fashion when we vary the generator means.

\end{document}